\documentclass{article}

\title{Learning to Link}

\author{%
Maria-Florina Balcan\\Carnegie Mellon University\\\texttt{ninamf@cs.cmu.edu}%
\and Travis Dick\\Toyota Technical Institute at Chicago\\\texttt{tdick@ttic.edu}%
\and Manuel Lang\\Karlsruhe Institute of Technology\\\texttt{manuel.lang@student.kit.edu}}

\usepackage{fullpage}
\usepackage[utf8]{inputenc} 
\usepackage[T1]{fontenc}    
\usepackage{hyperref}       
\usepackage{url}            
\usepackage{booktabs}       
\usepackage{amsfonts}       
\usepackage{nicefrac}       
\usepackage{microtype}      

\usepackage{mathtools, amsthm, amsfonts}
\usepackage{thmtools, thm-restate}
\usepackage{enumitem}
\usepackage{algorithm}
\usepackage{xcolor}
\usepackage{graphicx}
\usepackage{subcaption}
\usepackage{hyperref}
\usepackage{bm}
\usepackage{cleveref}
\usepackage{natbib}
\usepackage{pgfplots}

\newcommand \cX {\mathcal{X}}
\newcommand \cA {\mathcal{A}}
\newcommand \mergeFamily {\cA_\text{merge}}
\newcommand \mergeAlg [1]{A_{#1}^\text{merge}}
\newcommand \metricFamily {\cA_\text{metric}}
\newcommand \metricAlg [1]{A_{#1}^\text{metric}}

\newcommand \reals {\mathbb{R}}

\newcommand \argmin {\operatorname*{argmin}}

\newcommand \alo {\alpha_\text{lo}}
\newcommand \ahi {\alpha_\text{hi}}
\newcommand \blo {\beta_\text{lo}}
\newcommand \bhi {\beta_\text{hi}}

\newcommand \clusters {\mathcal{N}}


\newcommand \metricp [1]{\operatorname{d_{#1}}}
\newcommand \dbeta {\metricp{\beta}}
\newcommand \dOne {\metricp{1}}
\newcommand \dZero {\metricp{0}}

\newcommand \cmetric [1]{\operatorname{D_{#1}}}
\newcommand \mergep [1]{\cmetric{#1}}
\newcommand \Dalpha {\cmetric{\alpha}}
\newcommand \DOne {\cmetric{1}}
\newcommand \DZero {\cmetric{0}}
\newcommand \Dmin {\cmetric{\text{min}}}
\newcommand \Dmax {\cmetric{\text{max}}}
\newcommand \Davg {\cmetric{\text{avg}}}

\newcommand \leaf {\operatorname{Leaf}}
\newcommand \node {\operatorname{Node}}
\newcommand \target {\mathcal{Y}}

\newcommand \numMetrics {L}
\newcommand \numMerges {{L'}}
\newcommand \cH {\mathcal{H}}
\newcommand \cQ {\mathcal{Q}}

\newcommand \cD {\mathcal{D}}
\newcommand \cL {\mathcal{L}}
\newcommand \expect {\operatorname*{\mathbb{E}}}
\renewcommand \vec [1]{\bm{#1}}
\newcommand \sign {\operatorname{sign}}

\newcommand \region {\mathcal{Z}}
\newcommand \pdim {\operatorname{Pdim}}

\theoremstyle{definition}
\newtheorem{defn}{Definition}

\newcommand \itparagraph [1]{\vspace{0.5em}\noindent\textit{#1}}
\renewcommand \paragraph [1]{\vspace{0.5em}\noindent\textbf{#1}}

\begin{document}

\maketitle

\begin{abstract}

Clustering is an important part of many modern data analysis pipelines,
including network analysis and data retrieval. There are many different
clustering algorithms developed by various communities, and it is often not
clear which algorithm will give the best performance on a specific clustering
task. Similarly, we often have multiple ways to measure distances between data
points, and the best clustering performance might require a non-trivial
combination of those metrics. In this work, we study data-driven algorithm
selection and metric learning for clustering problems, where the goal is to
simultaneously learn the best algorithm and metric for a specific application.
The family of clustering algorithms we consider is parameterized linkage based
procedures that includes single and complete linkage. The family of distance
functions we learn over are convex combinations of base distance functions. We
design efficient learning algorithms which receive samples from an
application-specific distribution over clustering instances and learn a
near-optimal distance and clustering algorithm from these classes. We also carry
out a comprehensive empirical evaluation of our techniques showing that they can
lead to significantly improved clustering performance on real-world datasets.

\end{abstract}

\section{Introduction} \label{sec:introduction}
\paragraph{Overview.} Clustering is an important component of modern data
analysis. For example, we might cluster emails as a pre-processing step for spam
detection, or we might cluster individuals in a social network in order to
suggest new connections. There are a myriad of different clustering algorithms,
and it is not always clear what algorithm will give the best performance on a
specific clustering task. Similarly, we often have multiple different ways to
measure distances between data points, and it is not obvious which distance
metric will lead to the best performance. In this work, we study data-driven
algorithm selection and metric learning for clustering problems, where the goal
is to use data to simultaneously learn the best algorithm and metric for a
specific application such as clustering emails or users of a social network. An
application is modeled as a distribution over clustering tasks, we observe an
i.i.d. sample of clustering instances drawn from that distribution, and our goal
is to choose an approximately optimal algorithm from a parameterized family of
algorithms (according to some well-defined loss function). This corresponds to
settings where we repeatedly solve clustering instances (e.g., clustering the
emails that arrive each day) and we want to use historic instances to learn the
best clustering algorithm and metric.

The family of clustering algorithms we learn over consists of parameterized
linkage based procedures and includes single and complete linkage, which are
widely used in practice and optimal in many cases
\citep{Awasthi14:ActiveClustering, Saeed03:Clustering, White10:Phylo,
Awasthi12:PS, Balcan16:PR, Grosswendt15:CL}. The family of distance metrics we
learn over consists of convex combinations of base distance functions. We design
efficient learning algorithms that receive samples from an application-specific
distribution over clustering instances and simultaneously learn both a
near-optimal distance metric and clustering algorithm from these classes. We
contribute to a recent line of work that provides learning-theoretical
\citep{SBD14:MLBook} guarantees for data-driven algorithm configuration
\citep{Gupta17:PAC, Balcan17:Learning, Balcan18:MIP, Balcan18:kmeans,
Balcan19:General}. These papers analyze the intrinsic complexity of
parameterized algorithm families in order to provide sample complexity
guarantees, that is, bounds on the number of sample instances needed in order to
find an approximately optimal algorithm for a given application domain. Our
results build on the work of \citet{Balcan17:Learning}, who studied the problem
of learning the best clustering algorithm from a class of linkage based
procedures, but did not study learning the best metric. In addition to our
sample complexity guarantees, we develop a number of algorithmic tools that
enable learning application specific clustering algorithms and metrics for
realistically large clustering instances. We use our efficient implementations
to conduct comprehensive experiments on clustering domains derived from both
real-world and synthetic datasets. These experiments demonstrate that learning
application-specific algorithms and metrics can lead to significant performance
improvements over standard algorithms and metrics.

\paragraph{Our Results.} We study linkage-based clustering algorithms that take
as input a clustering instance $S$ and output a hierarchical clustering of $S$
represented as a binary \emph{cluster tree}. Each node in the tree represents a
cluster in the data at one level of granularity, with the leaves corresponding
to individual data points and the root node corresponding to the entire dataset.
Each internal node represents a cluster obtained by merging its two children.
Linkage-based clustering algorithms build a cluster tree from the leaves up,
starting with each point belonging to its own cluster and repeatedly merging the
``closest'' pair of clusters until only one remains. The parameters of our
algorithm family control both the metric used to measure pointwise distances, as
well as how the linkage algorithm measures distances between clusters (in terms
of the distances between their points).

This work has two major contributions.
Our first contribution is to provide sample complexity guarantees for learning
effective application-specific distance metrics for use with linkage-based
clustering algorithms.
The key challenge is that, if we fix a clustering algorithm from the family we
study and a single clustering instance $S$, the algorithm output is a piecewise
constant function of our metric family's parameters.
This implies that, unlike many standard learning problems, the loss we want to
minimize is very sensitive to the metric parameters and small perturbations to
the optimal parameters can lead to high loss.
Our main technical insight is that for any clustering instance $S$, we can
partition the parameter space of our metric family into a relatively small
number of regions such that the ordering over pairs of points in $S$ given by
the metric is constant on each region.
The clustering output by all algorithms in the family we study only depends on
the ordering over pairs of points induced by the metric, and therefore their
output is also a piecewise constant function of the metric parameters with not
too many pieces.
%
%
We leverage this structure to bound the intrinsic complexity of the learning
problem, leading to uniform convergence guarantees.
By combining our results with those of \citet{Balcan17:Learning}, we show how to
simultaneously learn both an application-specific metric and linkage algorithm.


Our second main contribution is a comprehensive empirical evaluation of our
proposed methods, enabled by new algorithmic insights for efficiently learning
application-specific algorithms and metrics from sample clustering instances.
For any fixed clustering instance, we show that we can use an \emph{execution
tree} data structure to efficiently construct a coarse partition of the joint
parameter space so that on each region the output clustering is constant.
Roughly speaking, the execution tree compactly describes all possible sequences
of merges the linkage algorithm might make together with the parameter settings
for the algorithm and metric that lead to that merge sequence.
The learning procedure proposed by \citet{Balcan17:Learning} takes a more
combinatorial approach, resulting in partitions of the parameter space that have
many unnecessary regions and increased overall running time.
\citet{Balcan18:MIP} and \citet{Balcan18:kmeans} also use an execution tree
approach for different algorithm families, however their specific approaches to
enumerating the tree are not efficient enough to be used in our setting.
We show that using a depth-first traversal of the execution tree leads to
significantly reduced memory requirements, since in our setting the execution
tree is shallow but very wide.

Using our efficient implementations, we evaluate our learning algorithms on
several real world and synthetic clustering applications.
We learn the best algorithm and metric for clustering applications derived from
MNIST, CIFAR-10, Omniglot, Places2, and a synthetic rings and disks
distribution.
Across these different tasks the optimal clustering algorithm and metric vary
greatly.
Moreover, in most cases we achieve significant improvements in clustering
quality over standard clustering algorithms and metrics.

\paragraph{Related work.}

\citet{Gupta17:PAC} introduced the theoretical framework for analyzing algorithm
configuration problems that we study in this work. They provide sample
complexity guarantees for greedy algorithms for canonical subset selection
problems including the knapsack problem, maximum weight independent set, and
machine scheduling.

Some recent works provide sample complexity guarantees for learning
application-specific clustering algorithms. \citet{Balcan17:Learning} consider
several parameterized families of linkage based clustering algorithms, one of
which is a special case of the family studied in this paper. Their sample
complexity results are also based on showing that for a single clustering
instance, we can find a partitioning of the algorithm parameter space into
regions where the output clustering is constant. The families of linkage
procedures they study have a single parameter, while our linkage algorithm and
metric families have multiple. Moreover, they suppose we are given a fixed
metric for each clustering instance and do not study the problem of learning an
application-specific metric. \citet{Balcan18:kmeans} study the related problem
of learning the best initialization procedure and local search method to use in
a clustering algorithm inspired by Lloyd's method for $k$-means clustering.
Their sample complexity results are again based on demonstrating that for any
clustering instance, there exists a partitioning of the parameter space on which
the algorithm's output is constant. The parameter space partitions in both of
these related works are defined by linear separators. Due to the interactions
between the distance metric and the linkage algorithm, our partitions are
defined by quadratic functions.

The procedures proposed by prior work for finding an empirically optimal
algorithm for a collection of problem instances roughly fall into two
categories: combinatorial approaches and approaches based on an execution-tree
data structure. \citet{Gupta17:PAC} and \citet{Balcan17:Learning} are two
examples of the combinatorial approach. They show that the boundaries in the
constant-output partition of the algorithm parameter space always occur at the
solutions to finitely many equations that depend on the problem instance. To
find an empirically optimal algorithm, they find all solutions to these
problem-dependent equations to explicitly construct a partition of the parameter
space. Unfortunately, only a small subset of the solutions are actual boundaries
in the partition. Consequently, their partitions contain many extra regions and
suffer from long running times. The execution-tree based approaches find the
coarsest possible partitioning of the parameter space such that the algorithm
output is constant. \citet{Balcan18:kmeans} and \citet{Balcan18:MIP} both use
execution trees to find empirically optimal algorithm parameters for different
algorithm families. However, the specific algorithms used to construct and
enumerate the execution tree are different from those explored in this paper and
are not suitable in our setting.

\section{Learning Clustering Algorithms} \label{sec:jointFamily}
The problem we study is as follows. Let $\cX$ be a data domain. A clustering
instance consists of a point set $S = \{x_1, \dots, x_n\} \subset \cX$ and an
(unknown) target clustering $\target = (C_1, \dots, C_k)$, where the sets $C_1,
\dots, C_k$ partition $S$ into $k$ clusters. Linkage-based clustering algorithms
output a hierarchical clustering of the input data, represented by a cluster
tree. We measure the agreement of a cluster tree $T$ with the target clustering
$\target = (C_1, \dots, C_k)$ in terms of the Hamming distance between $\target$
and the closest pruning of $T$ into $k$ clusters (i.e., $k$ disjoint subtrees
that contain all the leaves of $T$). More formally, we define the loss
$
  \ell(T, \target)
  = \min_{P_1, \dots, P_k} \min_{\sigma \in \mathbb{S}_n}
  \frac{1}{|S|} \sum_{i=1}^k |C_i \setminus P_{\sigma_i}|,
$
where $A \setminus B$ denotes set difference, the first minimum is over all
prunings $P_1, \dots, P_k$ of the cluster tree $T$, and the second minimum is
over all permutations of the $k$ cluster indices. This formulation allows us to
handle the case where each clustering task has a different number of clusters,
and where the desired number might not be known in advance. Our analysis applies
to any loss function $\ell$ measuring the quality of the output cluster tree
$T$, but we focus on the Hamming distance for simplicity. Given a distribution
$\mathcal{D}$ over clustering instances (i.e., point sets together with target
clusterings), our goal is to find the algorithm $A$ from a family $\cA$ with the
lowest expected loss for an instance sampled from $\mathcal{D}$. As training
data, we assume that we are given an i.i.d. sample of clustering instances
annotated with their target clusterings drawn from the application distribution
$\mathcal{D}$.

We study linkage-based clustering algorithms. These algorithms construct a
hierarchical clustering of a point set by starting with each point belonging to
a cluster of its own and then they repeatedly merge the closest pair of clusters
until only one remains. There are two distinct notions of distance at play in
linkage-based algorithms: first, the notion of distance between pairs of points
(e.g., Euclidean distance between feature vectors, edit distance between
strings, or the Jaccard distance between sets). Second, these algorithms must
define a distance function between clusters, which we refer to as a merge
function to avoid confusion. A merge function $D$ defines the distance between a
pair of clusters $A, B \subset \cX$ in terms of the pairwise distances given by
a metric $d$ between their points. For example, single linkage uses the merge
function $\Dmin(A, B; d) = \min_{a \in A, b \in B} d(a, b)$ and complete linkage
uses the merge function $\Dmax(A, B; d) = \max_{a \in A, b \in B} d(a,b)$.

Our parameterized family of linkage-based clustering algorithms allows us to
vary both the metric used to measure distances between points, as well as the
merge function used to measure distances between clusters. To vary the metric,
we suppose we have access to $\numMetrics$ metrics $d_1, \dots, d_\numMetrics$
defined on our data universe $\cX$, and our goal is to find the best convex
combination of those metrics. That is, for any parameter vector $\vec{\beta} \in
\Delta_\numMetrics = \{ \vec{\beta} \in [0,1]^\numMetrics \mid \sum_i \beta_i =
1\}$, we define a metric $d_{\vec{\beta}}(x,x') = \sum_i \beta_i \cdot
d_i(x,x')$. This definition is suitable across a wide range of applications,
since it allows us to learn the best combination of a given set of metrics for
the application at hand. Similarly, for varying the merge function, we suppose
we have $\numMerges$ merge functions $D_1, \dots, D_\numMerges$. For any
parameter $\vec{\alpha} \in \Delta_\numMerges$, define the merge function
$D_{\vec{\alpha}}(A,B; d) = \sum_i \alpha_i D_i(A, B; d)$. For each pair of
parameters $\vec{\beta} \in \Delta_\numMetrics$ and $\vec{\alpha} \in
\Delta_\numMerges$, we obtain a different clustering algorithm (i.e., one that
repeatedly merges the pair of clusters minimizing $D_{\vec{\alpha}}(\cdot,
\cdot; d_{\vec{\beta}})$). Pseudocode for this method is given in
\Cref{alg:jointLinkage}. In the pseudocode, clusters are represented by binary
trees with leaves correspond to the points belonging to that cluster.
For any clustering instance $S \subset \cX$, we let $\cA_{\vec{\alpha},
\vec{\beta}}(S)$ denote the cluster tree output by \Cref{alg:jointLinkage} when
run with parameter vectors $\vec{\alpha}$ and $\vec{\beta}$.

\begin{algorithm}
\textbf{Input:} Metrics $d_1, \dots, d_\numMetrics$, merge functions $D_1, \dots, D_\numMerges$, points $x_1, \dots, x_n \in \cX$, parameters $\vec{\alpha}$ and $\vec{\beta}$.
\begin{enumerate}[nosep, leftmargin=*]
\item Let $\mathcal{N} = \{\leaf(x_1), \dots, \leaf(x_n)\}$ be the initial set of nodes (one leaf per point).
\item While $|\mathcal{N}| > 1$
\begin{enumerate}[nosep, leftmargin=*]
  \item Let $A, B \in \mathcal{N}$ be the clusters in $\mathcal{N}$ minimizing $D_{\vec{\alpha}}(A, B; d_{\vec{\beta}})$.
  \item Remove clusters $A$ and $B$ from $\mathcal{N}$ and add $\node(A,B)$ to $\mathcal{N}$.
\end{enumerate}
\item Return the cluster tree (the only element of $\mathcal{N}$).
\end{enumerate}
\caption{Linkage Clustering}
\label{alg:jointLinkage}
\end{algorithm}

First, we provide sample complexity results that hold for any collection of
metrics $d_1, \dots, d_\numMetrics$ and any collection of merge functions $D_1,
\dots, D_\numMerges$ that belong to the following family:

\begin{defn}
  A merge function $D$ is \emph{2-point-based} if for any pair of clusters $A,B
  \subset \cX$ and any metric $d$, there exists a pair of points $(a, b) \in A
  \times B$ such that $D(A,B;d) = d(a,b)$. Moreover, the pair of points defining
  the merge distance must depend only on the ordering of pairwise distances.
  More formally, if $d$ and $d'$ are two metrics s.t. for all $a, a' \in A$ and
  $b, b' \in B$, we have $d(a,b) \leq d(a',b')$ if and only if $d'(a,b) \leq
  d'(a',b')$, then $D(A,B;d) = d(a,b)$ implies that $D(A,B;d') = d'(a,b)$.
\end{defn}

For example, both single and complete linkage are 2-point-based merge functions,
since they output the distance between the closest or farthest pair of points,
respectively.
%

\begin{restatable}{thm}{thmSampleComplexity} \label{thm:sampleComplexity}
  Fix any metrics $d_1, \dots, d_\numMetrics$, 2-point-based merge functions
  $D_1, \dots, D_\numMerges$, and distribution $\mathcal{D}$ over clustering
  instances with at most $n$ points. For any parameters $\epsilon > 0$ and
  $\delta > 0$, let $(S_1, \target_1), \dots, (S_N, \target_N)$ be an i.i.d.
  sample of $N = O\left(\frac{1}{\epsilon^2}\left((\numMerges +
  \numMetrics)^2(\log(\numMerges + \numMetrics) + \numMerges \log n) + \log
  \frac{1}{\delta} \right)\right) = \tilde O\left(\frac{(\numMerges +
  \numMetrics)^2 \numMerges}{\epsilon^2}\right)$ clustering instances with
  target clusterings drawn from $\cD$. Then with probability at least $1-\delta$
  over the draw of the sample, we have
  \[
  \sup_{(\vec{\alpha}, \vec{\beta}) \in \Delta_\numMerges \times \Delta_\numMetrics}
  \left|
  \frac{1}{N} \sum_{i=1}^N \ell(\cA_{\vec{\alpha},\vec{\beta}}(S_i), \target_i)
  -
  \expect_{(S,\target) \sim \cD}\bigl[
  \ell(\cA_{\vec{\alpha},\vec{\beta}}(S), \target)
  \bigr]
  \right|
  \leq \epsilon.
  \]
\end{restatable}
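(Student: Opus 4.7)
My plan is to bound the pseudo-dimension of the loss class $\cF = \{(S, \target) \mapsto \ell(\cA_{\vec{\alpha}, \vec{\beta}}(S), \target) : (\vec{\alpha}, \vec{\beta}) \in \Delta_{\numMerges} \times \Delta_{\numMetrics}\}$ and then invoke a standard pseudo-dimension based uniform-convergence theorem for $[0,1]$-valued losses (e.g.\ \citep[Ch.~19]{SBD14:MLBook}). The entire argument reduces to showing that, for any fixed clustering instance $S$ of at most $n$ points, the joint parameter space $\Delta_{\numMerges} \times \Delta_{\numMetrics}$ can be partitioned into a moderate number of cells on each of which the output cluster tree $\cA_{\vec{\alpha}, \vec{\beta}}(S)$---and hence the loss---is constant in $(\vec{\alpha}, \vec{\beta})$.

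I would build this partition in two stages, mirroring the structural insights highlighted in the introduction. First, for a fixed instance the ordering of the $\binom{n}{2}$ pairwise distances induced by $d_{\vec{\beta}} = \sum_i \beta_i d_i$ is determined by the signs of the $O(n^4)$ linear functions $\sum_i \beta_i (d_i(x, x') - d_i(y, y'))$, so by the standard hyperplane-arrangement bound the simplex $\Delta_{\numMetrics}$ is carved into at most $O(n^{4\numMetrics})$ ordering cells on which the pairwise-distance ordering is fixed. Inside any such cell, the 2-point-based property of each $D_j$ forces $D_j(A, B; d_{\vec{\beta}}) = d_{\vec{\beta}}(a^{(j)}_{AB}, b^{(j)}_{AB})$ for a pair of points depending only on the ordering (not on $\vec{\beta}$ itself), so
\[
D_{\vec{\alpha}}(A, B; d_{\vec{\beta}}) = \sum_{i, j} \alpha_j \beta_i \, d_i\bigl(a^{(j)}_{AB}, b^{(j)}_{AB}\bigr)
\]
is a bilinear form in $(\vec{\alpha}, \vec{\beta})$. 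Consequently every merge comparison made by \Cref{alg:jointLinkage} within an ordering cell is the sign of a degree-$2$ polynomial in the $(\numMerges + \numMetrics)$-dimensional joint parameter.

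Second, I would refine each ordering cell by the arrangement of these quadratic polynomials in $(\vec{\alpha}, \vec{\beta})$. The key quantitative step is bounding how many distinct comparison polynomials can ever arise during \Cref{alg:jointLinkage} for some parameter setting in a single ordering cell. I would argue, via an execution-tree analysis in the spirit of \citet{Balcan17:Learning, Balcan18:MIP}, that at most $n^{O(\numMerges)}$ distinct polynomials per ordering cell suffice, since the fixed ordering pins down the points attaining each 2-point-based merge value and only cluster pairs jointly realizable as merge candidates for \emph{some} parameter consistent with the ordering ever appear. A Warren-type arrangement bound for degree-$2$ surfaces in $\reals^{\numMerges + \numMetrics}$ then gives at most $n^{O((\numMerges + \numMetrics)\numMerges)} \cdot (\numMerges + \numMetrics)^{O(\numMerges + \numMetrics)}$ refinement cells per ordering cell. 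The standard reduction from per-instance cell complexity to pseudo-dimension yields $\pdim(\cF) = O\bigl((\numMerges + \numMetrics)^2 (\log(\numMerges + \numMetrics) + \numMerges \log n)\bigr)$, and substituting into the uniform-convergence bound recovers the stated sample size $N$.

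The main obstacle is this second-stage counting argument. A naive bound that enumerates all pairs of pairs of clusters is exponential in $n$ and would destroy the polynomial $n$-dependence of the pseudo-dimension. The payoff of restricting attention to a single ordering cell---so that each comparison polynomial is pinned down solely by the two cluster pairs it compares---is what makes an $n^{O(\numMerges)}$ count plausible, but turning this into a rigorous bound requires carefully tracking the $\vec{\alpha}$-induced branching in the execution tree and pruning branches that no consistent parameter setting realizes.
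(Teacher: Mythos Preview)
Your overall strategy---bound the pseudo-dimension of the loss class by showing that the dual functions $(\vec{\alpha},\vec{\beta}) \mapsto \ell(\cA_{\vec{\alpha},\vec{\beta}}(S),\target)$ are piecewise constant over a sign-pattern partition induced by quadratic polynomials, then invoke a uniform-convergence bound---is exactly the paper's approach, and your first stage (the $O(n^4)$ linear functions that fix the pairwise-distance ordering) is \Cref{lem:distanceOrdering}. The place where you diverge is the second-stage counting, and there you are making the problem much harder than it is.

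You correctly observe that inside an ordering cell each comparison $D_{\vec{\alpha}}(A,B;d_{\vec{\beta}}) \le D_{\vec{\alpha}}(A',B';d_{\vec{\beta}})$ is governed by a quadratic polynomial in $(\vec{\alpha},\vec{\beta})$, and you worry that the number of such polynomials is governed by the number of pairs-of-pairs of clusters, which is exponential in $n$. Your proposed fix is an execution-tree argument to prune down to the cluster pairs that are ``realizable.'' But this is unnecessary. Look at your own displayed formula: the polynomial $q_{A,B,A',B'}(\vec{\alpha},\vec{\beta}) = \sum_{i,j} \alpha_j \beta_i\bigl(d_i(a^{(j)}_{AB},b^{(j)}_{AB}) - d_i(a^{(j)}_{A'B'},b^{(j)}_{A'B'})\bigr)$ has coefficients that depend only on the $4\numMerges$ witness points $(a^{(j)}_{AB},b^{(j)}_{AB},a^{(j)}_{A'B'},b^{(j)}_{A'B'})_{j \in [\numMerges]}$, \emph{not} on the clusters $A,B,A',B'$ themselves. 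Two completely different quadruples of clusters that happen to have the same witness points yield the identical polynomial. Hence, ranging over all quadruples of clusters and over all ordering cells, the total number of \emph{distinct} quadratic polynomials that can arise is at most $|S|^{4\numMerges}$---one for each choice of the $4\numMerges$ witness points from $S$. This is precisely the content of \Cref{lem:quadraticPartition} in the paper, and it requires no execution-tree reasoning whatsoever.

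Once you have a single global family $\cQ$ of $O(n^{4\numMerges})$ quadratics (together with the $O(n^4)$ linear functions from $\cH$, which are themselves degenerate quadratics) whose sign pattern determines the algorithm's output, the pseudo-dimension bound follows directly from the general piecewise-structure theorem of \citet{Balcan19:General}: the dual class of degree-$2$ threshold functions on $\reals^{\numMerges+\numMetrics}$ has VC-dimension $O((\numMerges+\numMetrics)^2)$, giving $\pdim(\cL)=O\bigl((\numMerges+\numMetrics)^2(\log(\numMerges+\numMetrics)+\numMerges\log n)\bigr)$. So the ``main obstacle'' you identify dissolves as soon as you count polynomials by their defining point tuples rather than by the cluster pairs that generate them.
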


The key step in the proof of \Cref{thm:sampleComplexity} is to show that for any
clustering instance $S$ with target clustering $\target$, the function
$(\vec{\alpha}, \vec{\beta}) \mapsto \ell(\cA_{\vec{\alpha}, \vec{\beta}}(S),
\target)$ is piecewise constant with not too many pieces and where each piece is
simple. Intuitively, this guarantees that for any collection of clustering
instances, we cannot see too much variation in the loss of the algorithm on
those instances as we vary over the parameter space. \citet{Balcan19:General}
give precise sample complexity guarantees for algorithm configuration problems
when the cost is a piecewise-structured function of the algorithm parameters. In
\Cref{app:jointFamily} we apply their general bounds together with our
problem-specific structural to prove \Cref{thm:sampleComplexity}. In the
remainder of this section, we prove the key structural property.

We let $\vec{\zeta} = (\vec{\alpha}, \vec{\beta}) \in \Delta_\numMerges \times
\Delta_\numMetrics$ denote a pair of parameter vectors for
\Cref{alg:jointLinkage}, viewed as a vector in $\reals^{\numMerges +
\numMetrics}$. Our parameter space partition will be induced by the sign-pattern
of $M$ quadratic functions.

\begin{defn}[Sign-pattern Partition]
  The sign-pattern partition of $\reals^p$ induced $M$ functions $f_1, \dots,
  f_M : \reals^p \to \reals$ is defined as follows: two points $\vec{\zeta}$ and
  $\vec{\zeta'}$ belong to the same region in the partition iff
  $\sign(f_i(\vec{\zeta})) = \sign(f_i(\vec{\zeta}'))$ for all $i \in [M]$. Each
  region is of the form $\region = \{ \vec{\zeta} \in \reals^p | F(\vec{\zeta})
  = \vec{b}\}$, for some sign-pattern vector $\vec{b} \in \{\pm 1\}^M$.
  %
\end{defn}

%

We show that for any fixed metrics $d_1, \dots, d_\numMetrics$ and clustering
instance $S = \{x_1, \dots, x_n\} \subset \cX$, we can find a sign-pattern
partitioning of $\Delta_\numMetrics$ induced by linear functions such that, on
each region, the ordering over pairs of points in $S$ induced by the metric
$d_{\vec{\beta}}$ is constant. An important consequence of this result is that
for each region $\region$ in this partitioning of $\Delta_\numMetrics$, the
following holds: For any 2-point-based merge function $D$ and any pair of
clusters $A,B \subset S$, there exists a pair of points $(a,b) \in A \times B$
such that $D(A, B; d_{\vec{\beta}}) = d_{\vec{\beta}}(a, b)$ for all
$\vec{\beta} \in \region$. In other words, restricted to $\vec{\beta}$
parameters belonging to $\region$, the same pair of points $(a,b)$ defines the
$D$-merge distance for the clusters $A$ and $B$.

\begin{restatable}{lem}{lemDistanceOrdering} \label{lem:distanceOrdering}
  Fix any metrics $d_1, \dots, d_\numMetrics$ and a clustering instance $S
  \subset \cX$. There exists a set $\cH$ of $O(|S|^4)$ linear functions mapping
  $\reals^\numMetrics$ to $\reals$ with the following property: if two metric
  parameters $\vec{\beta}, \vec{\beta}' \in \Delta_\numMetrics$ belong to the
  same region in the sign-pattern partition induced by $\cH$, then the ordering
  over pairs of points in $S$ given by $d_{\vec{\beta}}$ and $d_{\vec{\beta}'}$
  are the same. That is, for all points $a, b, a', b' \in S$ we have
  $d_{\vec{\beta}}(a,b) \leq d_{\vec{\beta}}(a', b')$ iff $d_{\vec{\beta}'}(a,b)
  \leq d_{\vec{\beta}'}(a',b')$.
\end{restatable}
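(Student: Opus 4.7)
The plan is to construct the set $\cH$ directly by writing down, for each unordered pair of pairs of points, the single linear function whose sign determines the relative order of their $d_{\vec{\beta}}$-distances, and then count.

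First, fix any four (not necessarily distinct) points $a, b, a', b' \in S$. By definition of $d_{\vec{\beta}}$,
\[
d_{\vec{\beta}}(a,b) - d_{\vec{\beta}}(a',b')
= \sum_{i=1}^\numMetrics \beta_i \bigl( d_i(a,b) - d_i(a',b') \bigr),
\]
which is a linear function of $\vec{\beta}$; call it $h_{a,b,a',b'}(\vec{\beta})$. Define $\cH$ to be the collection of all such functions $h_{a,b,a',b'}$ as $(a,b)$ and $(a',b')$ range over (unordered) pairs of points in $S$. Since there are $\binom{|S|}{2} = O(|S|^2)$ pointwise pairs, the number of pairs of pairs is $O(|S|^4)$, so $|\cH| = O(|S|^4)$ as required.

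Next, observe that for any $\vec{\beta}, \vec{\beta}' \in \Delta_\numMetrics$ lying in the same region of the sign-pattern partition induced by $\cH$, we have $\sign(h_{a,b,a',b'}(\vec{\beta})) = \sign(h_{a,b,a',b'}(\vec{\beta}'))$ for every choice of $a,b,a',b' \in S$. But the strict inequality $d_{\vec{\beta}}(a,b) < d_{\vec{\beta}}(a',b')$, the equality $d_{\vec{\beta}}(a,b) = d_{\vec{\beta}}(a',b')$, and the reverse inequality correspond exactly to $h_{a,b,a',b'}(\vec{\beta})$ being negative, zero, or positive. Hence $d_{\vec{\beta}}(a,b) \le d_{\vec{\beta}}(a',b')$ iff $d_{\vec{\beta}'}(a,b) \le d_{\vec{\beta}'}(a',b')$, which is precisely the claim.

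There is really no obstacle here beyond making the correspondence above precise: the key point is that $d_{\vec{\beta}}$ is \emph{linear} in $\vec{\beta}$, which makes each pairwise-distance comparison an affine inequality in $\vec{\beta}$. The only mild subtlety is bookkeeping: one should decide whether to include functions like $h_{a,b,a,b}$ (identically zero) or duplicate entries with swapped pair order, but these only affect constants and do not change the $O(|S|^4)$ bound. No nontrivial geometric or combinatorial argument is needed.
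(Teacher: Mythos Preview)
Your proposal is correct and follows essentially the same argument as the paper: define $h_{a,b,a',b'}(\vec{\beta}) = d_{\vec{\beta}}(a,b) - d_{\vec{\beta}}(a',b')$, observe it is linear in $\vec{\beta}$, collect all $O(|S|^4)$ such functions into $\cH$, and conclude that agreement on their signs forces agreement on all pairwise distance comparisons. If anything, your treatment of the zero case is slightly more explicit than the paper's.
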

\begin{proof}[Proof sketch]
  For any pair of points $a, b \in S$, the distance $d_{\vec{\beta}}(a,b)$ is a
  linear function of the parameter $\vec{\beta}$. Therefore, for any four points
  $a, b, a', b' \in S$, we have that $d_{\vec{\beta}}(a,b) \leq
  d_{\vec{\beta}}(a', b')$ iff $h_{a,b,a',b'}(\vec{\beta}) \leq 0$, where
  $h_{a,b,a',b'}$ is the linear function given by $h_{a,b,a',b'}(\vec{\beta}) =
  d_{\vec{\beta}}(a,b) - d_{\vec{\beta}}(a',b')$. Let $\cH = \{h_{a,b,a',b'}
  \mid a,b,a',b' \in S\}$ be the collection of all such linear functions arising
  from any subset of 4 points. On each region of the sign-pattern partition
  induced by $\cH$, all comparisons of pairwise distances in $S$ are fixed,
  implying that the ordering over pairs of points in $S$ is fixed.
\end{proof}

Building on \Cref{lem:distanceOrdering}, we now prove the main structural
property of \Cref{alg:jointLinkage}. We argue that for any clustering instance
$S \subset \cX$, there is a partition induced by quadratic functions of
$\Delta_\numMerges \times \Delta_\numMetrics \subset \reals^{\numMerges +
\numMetrics}$ over $\vec{\alpha}$ and $\vec{\beta}$ into regions such that on
each region, the ordering over all pairs of clusters according to the merge
distance $D_{\vec{\alpha}}(\cdot, \cdot; d_{\vec{\beta}})$ is fixed. This
implies that for all $(\vec{\alpha}, \vec{\beta})$ in one region of the
partition, the output of \Cref{alg:jointLinkage} when run on $S$ is constant,
since the algorithm output only depends on the ordering over pairs of clusters
in $S$ given by $D_{\vec{\alpha}}(\cdot, \cdot; d_{\vec{\beta}})$.

\begin{restatable}{lem}{lemQuadraticPartition} \label{lem:quadraticPartition}
  Fix any metrics $d_1, \dots, d_\numMetrics$, any 2-point-based merge functions
  $D_1, \dots, D_\numMerges$, and clustering instance $S \subset \cX$. There
  exists a set $\cQ$ of $O(|S|^{4\numMerges})$ quadratic functions defined on
  $\reals^{\numMerges + \numMetrics}$ so that if parameters
  $(\vec{\alpha},\vec{\beta})$ and $(\vec{\alpha}',\vec{\beta}')$ belong to the
  same region of the sign-pattern partition induced by $\cQ$, then the ordering
  over pairs of clusters in $S$ given by $D_{\vec{\alpha}}(\cdot, \cdot;
  d_{\vec{\beta}})$ and $D_{\vec{\alpha}'}(\cdot, \cdot; d_{\vec{\beta}'})$ is
  the same. That is,  for all clusters $A, B, A', B' \subset S$, we have that
  $D_{\vec{\alpha}}(A, B; d_{\vec{\beta}}) \leq D_{\vec{\alpha}}(A', B';
  d_{\vec{\beta}})$ iff $D_{\vec{\alpha}'}(A, B; d_{\vec{\beta}'}) \leq
  D_{\vec{\alpha}'}(A', B'; d_{\vec{\beta}'})$.
\end{restatable}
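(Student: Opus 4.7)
The plan is to combine \Cref{lem:distanceOrdering} with the definition of 2-point-based merge functions to express every cluster-ordering comparison as the sign of one of polynomially many quadratic functions on $\reals^{\numMerges + \numMetrics}$.

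First, I would take the set $\cH$ of $O(|S|^4)$ linear functions from \Cref{lem:distanceOrdering} and view each $h \in \cH$ as a function on $\reals^{\numMerges + \numMetrics}$ that ignores its $\vec{\alpha}$ coordinates; these are (degenerate) quadratics and will ensure that every region of the sign-pattern partition we build refines the $\vec{\beta}$-partition of \Cref{lem:distanceOrdering}. Next, for every ordered tuple of $\numMerges$ point pairs $(a_1,b_1),\dots,(a_{\numMerges},b_{\numMerges}) \in S\times S$ and every second such tuple $(a'_1,b'_1),\dots,(a'_{\numMerges},b'_{\numMerges})$, I would introduce the function
\[
  q(\vec{\alpha},\vec{\beta}) \;=\; \sum_{i=1}^{\numMerges} \alpha_i\bigl(d_{\vec{\beta}}(a_i,b_i) - d_{\vec{\beta}}(a'_i,b'_i)\bigr).
\]
Because each $d_{\vec{\beta}}(a,b) = \sum_j \beta_j d_j(a,b)$ is linear in $\vec{\beta}$, each $q$ is a quadratic in $(\vec{\alpha},\vec{\beta})$. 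The total number of such tuples is $|S|^{2\numMerges}\cdot |S|^{2\numMerges} = |S|^{4\numMerges}$, so together with $\cH$ we have a set $\cQ$ of size $O(|S|^{4\numMerges})$.

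Next I would verify the ordering property on each region $\region$ of the sign-pattern partition induced by $\cQ$. Since $\region$ refines the partition from \Cref{lem:distanceOrdering}, the ordering over pairwise distances on $S$ is constant on $\region$. The 2-point-based assumption then guarantees that for each pair of clusters $A,B\subset S$ and each merge function $D_i$, there is a fixed pair $(a_i^{A,B}, b_i^{A,B}) \in A\times B$ with $D_i(A,B;d_{\vec{\beta}}) = d_{\vec{\beta}}(a_i^{A,B}, b_i^{A,B})$ for every $\vec{\beta}$ appearing in $\region$. Consequently, throughout $\region$ we have
\[
  D_{\vec{\alpha}}(A,B;d_{\vec{\beta}}) - D_{\vec{\alpha}}(A',B';d_{\vec{\beta}}) = \sum_{i=1}^{\numMerges} \alpha_i\bigl(d_{\vec{\beta}}(a_i^{A,B}, b_i^{A,B}) - d_{\vec{\beta}}(a_i^{A',B'}, b_i^{A',B'})\bigr),
\]
which is exactly one of the quadratics in $\cQ$. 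Its sign is fixed on $\region$, so the ordering over pairs of clusters is constant there.

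The main obstacle is the fact that the points $(a_i^{A,B}, b_i^{A,B})$ realizing $D_i(A,B;d_{\vec{\beta}})$ are not intrinsic to the clusters but depend on the distance ordering, and hence on $\vec{\beta}$; this is precisely why a naive ``one quadratic per pair of cluster pairs'' count fails and why we must first invoke \Cref{lem:distanceOrdering} to freeze the realizing pair of points within a region. Including a quadratic for every possible $\numMerges$-tuple of realizing pairs, rather than just the ones that actually arise for a specific $(A,B)$, is what yields the clean $|S|^{4\numMerges}$ bound without having to enumerate subsets of $S$.
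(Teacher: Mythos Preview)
Your proposal is correct and follows essentially the same approach as the paper: include the linear functions $\cH$ from \Cref{lem:distanceOrdering} to freeze the pairwise distance ordering, then observe that within any such region the 2-point-based property pins down realizing point pairs, so each cluster-pair comparison becomes a quadratic whose coefficients are determined by $4\numMerges$ points in $S$, yielding $O(|S|^{4\numMerges})$ quadratics in total. The only cosmetic difference is that you enumerate all $|S|^{4\numMerges}$ candidate quadratics upfront, whereas the paper first fixes a region $\region$, derives the quadratic $q_{A,B,A',B'}$ there, and then notes that its coefficients are region-independent; both arrive at the same set $\cQ$.
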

\begin{proof}[Proof sketch]
  Let $\cH$ be the linear functions constructed in \Cref{lem:distanceOrdering}
  and fix any region $\region$ in the sign-pattern partition induced by $\cH$.
  For any $i \in [\numMerges]$, since the merge function $D_i$ is 2-point-based
  and the ordering over pairs of points according to $d_{\vec{\beta}}(\cdot,
  \cdot)$ in the region $\region$ is fixed, for any clusters $A, B, A', B'
  \subset S$ we can find points $a_i, b_i, a_i', b_i'$ such that $D_i(A, B;
  d_{\vec{\beta}}) = d_{\vec{\beta}}(a_i, b_i)$ and $D_i(A', B';
  d_{\vec{\beta}}) = d_{\vec{\beta}}(a'_i, b'_i)$ for all $\vec{\beta} \in R$.
  Therefore, expanding the definition of $D_{\vec{\alpha}}$, we have that
  $D_{\vec{\alpha}}(A, B; d_{\vec{\beta}}) \leq D_{\vec{\alpha}}(A', B';
  d_{\vec{\beta}})$ iff $q_{A,B,A',B'}(\vec{\alpha}, \vec{\beta}) \leq 0$, where
  $q_{A,B,A',B'}(\vec{\alpha}, \vec{\beta}) = \sum_i \sum_j \alpha_i \beta_j
  (d_j(a_i, b_i) - d_j(a_i', b_i'))$. Observe that the coefficients of each
  quadratic function depend only on $4\numMerges$ points in $S$, so there are
  only $O(|S|^{4 \numMerges})$ possible quadratics of this from collected across
  all regions $\region$ in the sign-pattern partition induced by $\cH$ and all
  subsets of 4 clusters. Together with $\cH$, this set of quadratic functions
  partitions the joint parameter space into regions where the ordering over all
  pairs of clusters is fixed.
\end{proof}

A consequence of \Cref{lem:quadraticPartition} is that for any clustering
instance $S$ with target clustering $\target$, the function $(\vec{\alpha},
\vec{\beta}) \mapsto \ell(\cA_{\vec{\alpha}, \vec{\beta}}(S), \target)$ is
piecewise constant, where the constant partitioning is the sign-pattern
partition induced by $O(|S|^{4\numMerges})$ quadratic functions. Combined with
the general theorem of \citet{Balcan19:General}, this proves
\Cref{thm:sampleComplexity}.

\paragraph{Extensions.} The above analysis can be extended to handle several
more general settings. First, we can accommodate many specific merge functions
that are not included in the 2-point-based family, at the cost of increasing the
number of quadratic functions $|\cQ|$ needed in \Cref{lem:quadraticPartition}.
For example, if one of the merge functions is average linkage, $\Davg(A,B; d) =
\frac{1}{|A|\cdot |B|} \sum_{a \in A, b \in B} d(a,b)$, then $|\cQ|$ will be
exponential in the dataset size $n$. Fortunately, our sample complexity analysis
depends only on $\log(|\cQ|)$, so this still leads to non-trivial sample
complexity guarantees (though the computational algorithm selection problem
becomes harder). We can also extend the analysis to more intricate methods for
combining the metrics and merge functions. For example, our analysis applies to
polynomial combinations of metrics and merges at the cost of increasing the
complexity of the functions defining the piecewise constant partition.

\section{Efficient Algorithm Selection} \label{sec:ermAlgorithms}
In this section we provide efficient algorithms for learning low-loss clustering
algorithms and metrics for application-specific distributions $\cD$ defined over
clustering instances. We begin by focusing on the special case where we have a
single metric and our goal is to learn the best combination of two merge
functions (i.e., $\numMetrics = 1$ and $\numMerges = 2$). This special case is
already algorithmically interesting. Next, we show how to apply similar
techniques to the case of learning the best combination of two metrics when
using the complete linkage merge function (i.e., $\numMetrics = 2$ and
$\numMerges = 1$). Finally, we discuss how to generalize our techniques to other
cases.

\paragraph{Learning the Merge Function.} We will use the following simplified
notation for mixing two base merge functions $D_0(A,B;d)$ and $D_1(A,B;d)$: for
each parameter $\alpha \in [0,1]$, let $D_\alpha(A, B; d) = (1-\alpha)D_0(A, B;
d) + \alpha D_1(A, B; d)$ denote the convex combination with weight $(1-\alpha)$
on $D_0$ and weight $\alpha$ on $D_1$. We let $\mergeAlg{\alpha}(S; D_0, D_1)$
denote the cluster tree produced by the algorithm with parameter $\alpha$, and
$\mergeFamily(D_0, D_1) = \{\mergeAlg{\alpha}(\cdot; D_0, D_1) \mid \alpha \in
[0,1]\}$ denote the parameterized algorithm family.

Our goal is to design efficient procedures for finding the algorithm from
$\mergeFamily(D_0, D_1)$ (and more general families) that has the lowest average
loss on a sample of labeled clustering instances $(S_1, \target_1), \dots, (S_N,
\target_N)$ where $\target_i = (C_1^{(i)}, \dots, C^{(i)}_{k_i})$ is the target
clustering for instance $S_i$. Recall that the loss function $\ell(T, \target)$
computes the Hamming distance between the target clustering $\target$ and the
closest pruning of the cluster tree $T$. Formally, our goal is to solve the
following optimization problem:
$
  \argmin_{\alpha \in [0,1]} \frac{1}{N}\sum_{i=1}^N \ell(\mergeAlg{\alpha}(S_i; D_0, D_1), \target_i).
$

The key challenge is that, for a fixed clustering instance $S$ we can partition
the parameter space $[0,1]$ into finitely many intervals such that for each
interval $I$, the cluster tree output by the algorithm $\mergeAlg{\alpha}(S;
D_0, D_1)$ is the same for every parameter in $\alpha \in I$. It follows that
the loss function is a piecewise constant function of the algorithm parameter.
Therefore, the optimization problem is non-convex and the loss derivative is
zero wherever it is defined, rendering gradient descent and similar algorithms
ineffective.

We solve the optimization problem by explicitly computing the piecewise constant
loss function for each instance $S_i$. That is, for instance $i$ we find a
collection of discontinuity locations
$0~=~c^{(i)}_0~<~\dots~<~c^{(i)}_{M_i}~=~1$ and values $v^{(i)}_1, \dots,
v^{(i)}_{M_i} \in \reals$ so that for each $j \in [M_i]$, running the algorithm
on instance $S_i$ with a parameter in $[c^{(i)}_{j-1}, c^{(i)}_j)$ has loss
equal to $v^{(i)}_j$. Given this representation of the loss function for each of
the $N$ instances, finding the parameter with minimal average loss can be done
in $O(M \log(M))$ time, where $M = \sum_i M_i$ is the total number of
discontinuities from all $N$ loss functions.
The bulk of the computational cost is incurred by computing the piecewise
constant loss functions, which we focus on for the rest of the section.

We exploit a more powerful structural property of the algorithm family to
compute the piecewise constant losses: for a clustering instance $S$ and any
length $t$, the sequence of first $t$ merges performed by the algorithm is a
piecewise constant function of the parameter (our sample complexity results only
used that the final tree is piecewise constant). For length $t = 0$, the
partition is a single region containing all parameters in $[0,1]$, since every
algorithm trivially starts with the empty sequence of merges. For each length $t
> 0$, the piecewise constant partition for the first $t$ merges is a refinement
of the partition for $t-1$ merges. We can represent this sequence of partitions
using a partition tree, where each node in the tree is labeled by an interval,
the nodes at depth $t$ describe the partition of $[0,1]$ after $t$ merges, and
edges represent subset relationships. This tree represents all possible
execution paths for the algorithm family when run on the instance $S$ as we vary
the algorithm parameter. In particular, each path from the root node to a leaf
corresponds to one possible sequence of merges. We therefore call this tree the
\emph{execution tree} of the algorithm family when run on $S$.
\Cref{fig:executionTree} shows an example execution tree for the family
$\mergeFamily(\Dmin, \Dmax)$. To find the piecewise constant loss function for a
clustering instance $S$, it is sufficient to enumerate the leaves of the
execution tree and compute the corresponding losses.
The following result, proved in \Cref{app:ermAlgorithms}, shows that the
execution tree for $\mergeFamily(D_0, D_1)$ is well defined.

\begin{restatable}{lem}{lemMergeExecutionTree}\label{lem:mergeExecutionTree}
  For any merge functions $\DZero$ and $\DOne$ and any clustering instance $S$,
  the execution tree for $\mergeFamily(\DZero, \DOne)$ when run on $S$ is well
  defined. That is, there exists a partition tree s.t. for any node $v$ at depth
  $t$, the same sequence of first $t$ merges is performed by $\mergeAlg{\alpha}$
  for all $\alpha$ in node $v$'s interval.
\end{restatable}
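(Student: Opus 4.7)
The plan is to prove the lemma by induction on the depth $t$ of the partition tree. The key observation is that for any two candidate pairs of clusters $(A,B)$ and $(A',B')$, the comparison $D_\alpha(A,B;d) \leq D_\alpha(A',B';d)$ is equivalent to the affine inequality
\[
  (1-\alpha)\bigl[D_0(A,B;d) - D_0(A',B';d)\bigr]
  + \alpha\bigl[D_1(A,B;d) - D_1(A',B';d)\bigr] \leq 0,
\]
which changes sign in at most one point of $[0,1]$. So for any fixed set of candidate pairs, the full ordering over pairs is piecewise constant in $\alpha$ with finitely many pieces.

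For the base case $t=0$, the root of the tree is the interval $[0,1]$ and the sequence of merges is empty, which is trivially the same for all $\alpha$. For the inductive step, fix a node $v$ at depth $t$ with interval $I$ and assume that for every $\alpha \in I$ the algorithm performs the same sequence of first $t$ merges. Then the set of active clusters $\mathcal{N}_t$ at the start of step $t+1$ is the same for all $\alpha \in I$. I would consider the finitely many candidate pairs $(A,B) \in \binom{\mathcal{N}_t}{2}$ and, for each unordered pair of candidates, compute the (at most one) breakpoint in $I$ where their $D_\alpha$-values swap order. These finitely many breakpoints split $I$ into finitely many subintervals; on the interior of each subinterval, the full ordering over candidate pairs is constant, so the argmin pair chosen as the $(t+1)$-th merge is constant. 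These subintervals become the children of $v$, and by construction every $\alpha$ in a child's interval leads to the same first $t+1$ merges.

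The one subtlety is how to handle the breakpoints themselves, where two candidate pairs tie. I would resolve this by fixing, once and for all, a deterministic tie-breaking rule on pairs of clusters (say a lexicographic ordering on their contents), which is exactly what \Cref{alg:jointLinkage} implicitly requires in order to be well-defined when ``the'' minimizing pair is not unique. With this convention, each breakpoint can be assigned to the subinterval on one side (closed on that side, open on the other), and the argmin pair is constant on the resulting half-open subintervals. This is the main, and essentially the only, delicate point: the affine structure does all the real work, and tie-breaking is what guarantees an actual partition rather than merely a cover.

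Putting these pieces together yields a well-defined partition tree: at depth $t$, the intervals are those obtained by applying the inductive construction along each root-to-depth-$t$ path, and within each interval the first $t$ merges agree by construction. Since only finitely many new breakpoints are introduced at each depth and the algorithm terminates after at most $|S|-1$ merges, the tree is finite. No further structural facts about $D_0$ and $D_1$ are needed beyond the fact that $D_\alpha$ is a convex combination; in particular, the argument does not rely on the merge functions being $2$-point-based.
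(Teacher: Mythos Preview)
Your proposal is correct and follows essentially the same approach as the paper: induction on the depth $t$, using the fact that each $D_\alpha(C_i,C_j)$ is affine in $\alpha$ so that any two merge distances cross at most once, which yields a partition of the parent interval into subintervals on which the next merge is constant. The only minor difference is presentational: you split the interval at all pairwise-comparison breakpoints (fixing the entire ordering), whereas the paper directly argues that the set of $\alpha$ for which a given pair $(C_i,C_j)$ is the minimizer is an intersection of half-intervals and hence itself an interval; your explicit treatment of tie-breaking is a detail the paper leaves implicit.
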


\begin{figure}
  \centering \includegraphics[width=0.45\textwidth]{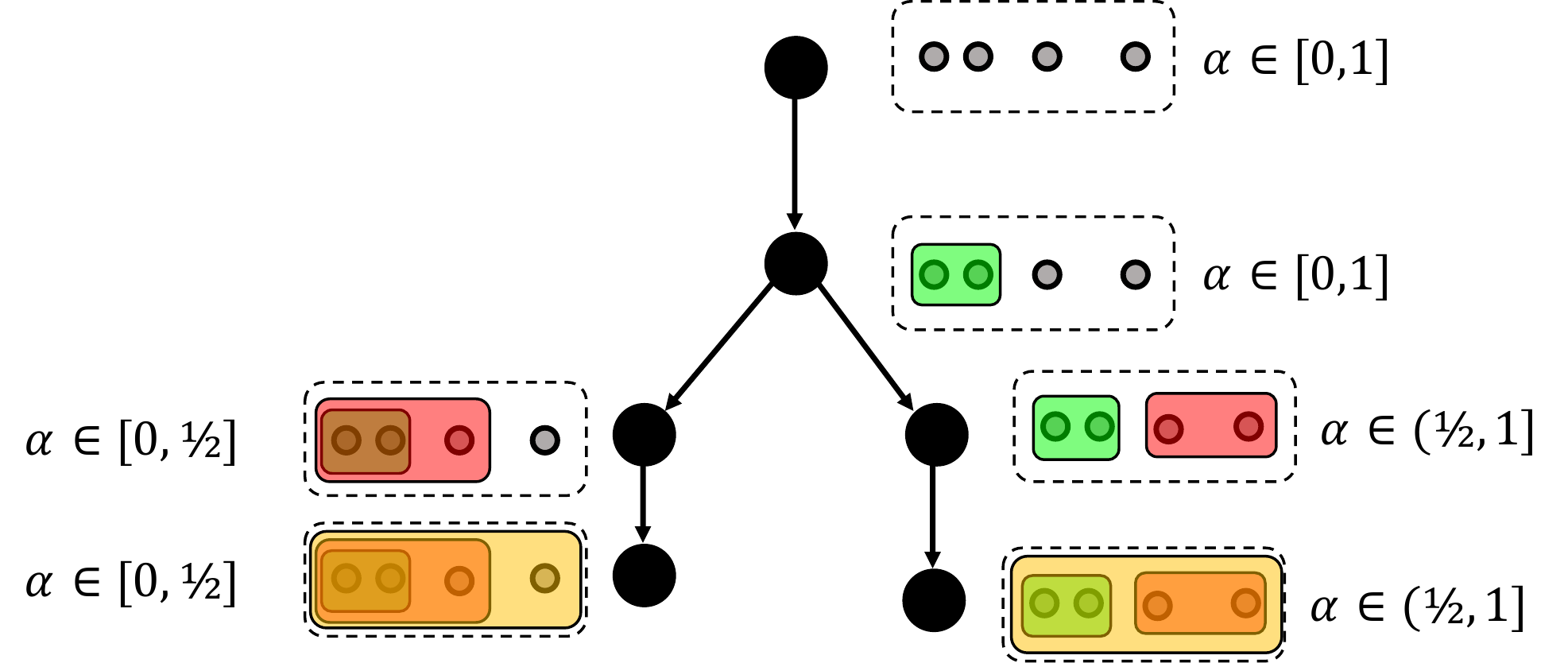}
  \caption{An example of the execution tree of $\mergeFamily(\Dmin, \Dmax)$ for
  a clustering instance with $4$ points. The nested rectangles show the
  clustering at each node.} \label{fig:executionTree}
  \vspace{-0.45cm}
\end{figure}

The fundamental operation required to perform a depth-first traversal of the
execution tree is finding a node's children. That is, given a node, its
parameter interval $[\alo, \ahi)$, and the set of clusters at that node $C_1,
\dots, C_m$, find all possible merges that will be chosen by the algorithm for
$\alpha \in [\alo, \ahi)$. We know that for each pair of merges $(C_i, C_j)$ and
$(C_i', C_j')$, there is a single critical parameter value where the algorithm
switches from preferring to merge $(C_i, C_j)$ to $(C_i', C_j')$. A direct
algorithm that runs in $O(m^4)$ time for finding the children of a node in the
execution tree is to compute all $O(m^4)$ critical parameter values and test
which pair of clusters will be merged on each interval between consecutive
critical parameters. We provide a more efficient algorithm that runs in time
$O(m^2 M)$, where $M \leq m^2$ is the number of children of the node.

Fix any node in the execution tree. Given the node's parameter interval $I =
[\alo, \ahi)$ and the set of clusters $C_1, \dots, C_m$ resulting from that
node's merge sequence, we use a sweep-line algorithm to determine all possible
next merges and the corresponding parameter intervals. First, we calculate the
merge for $\alpha = \alo$ by enumeration in $O(m^2)$ time. Suppose clusters
$C_i$ and $C_j$ are the optimal merge for $\alpha$. We then determine the
largest value $\alpha'$ for which $C_i$ and $C_j$ are still merged by solving
the linear equation $D_{\alpha}(C_i, C_j) = D_{\alpha}(C_k, C_l)$ for all other
pairs of clusters $C_k$ and $C_l$, keeping track of the minimal solution larger
than $\alpha$. Since there are only $O(m^2)$ alternative pairs of clusters, this
takes $O(m^2)$ time. Denote the minimal solution larger than $\alpha$ by $c \in
I$. We are guaranteed that $\mergeAlg{\alpha'}$ will merge clusters $C_i$ and
$C_j$ for all $\alpha' \in [\alpha, c)$. We repeat this procedure starting from
$\alpha = c$ to determine the next merge and corresponding interval, and so on,
sweeping through the $\alpha$ parameter space until $\alpha \geq \ahi$.
\Cref{alg:findMerges} in \Cref{app:ermAlgorithms} provides pseudocode for this
approach and \Cref{fig:sweepline} shows an example. Our next result bounds the
running time of this procedure.

\begin{figure}
  \centering
  \begin{subfigure}[b]{0.35\textwidth}
    \includegraphics[width=0.9\textwidth]{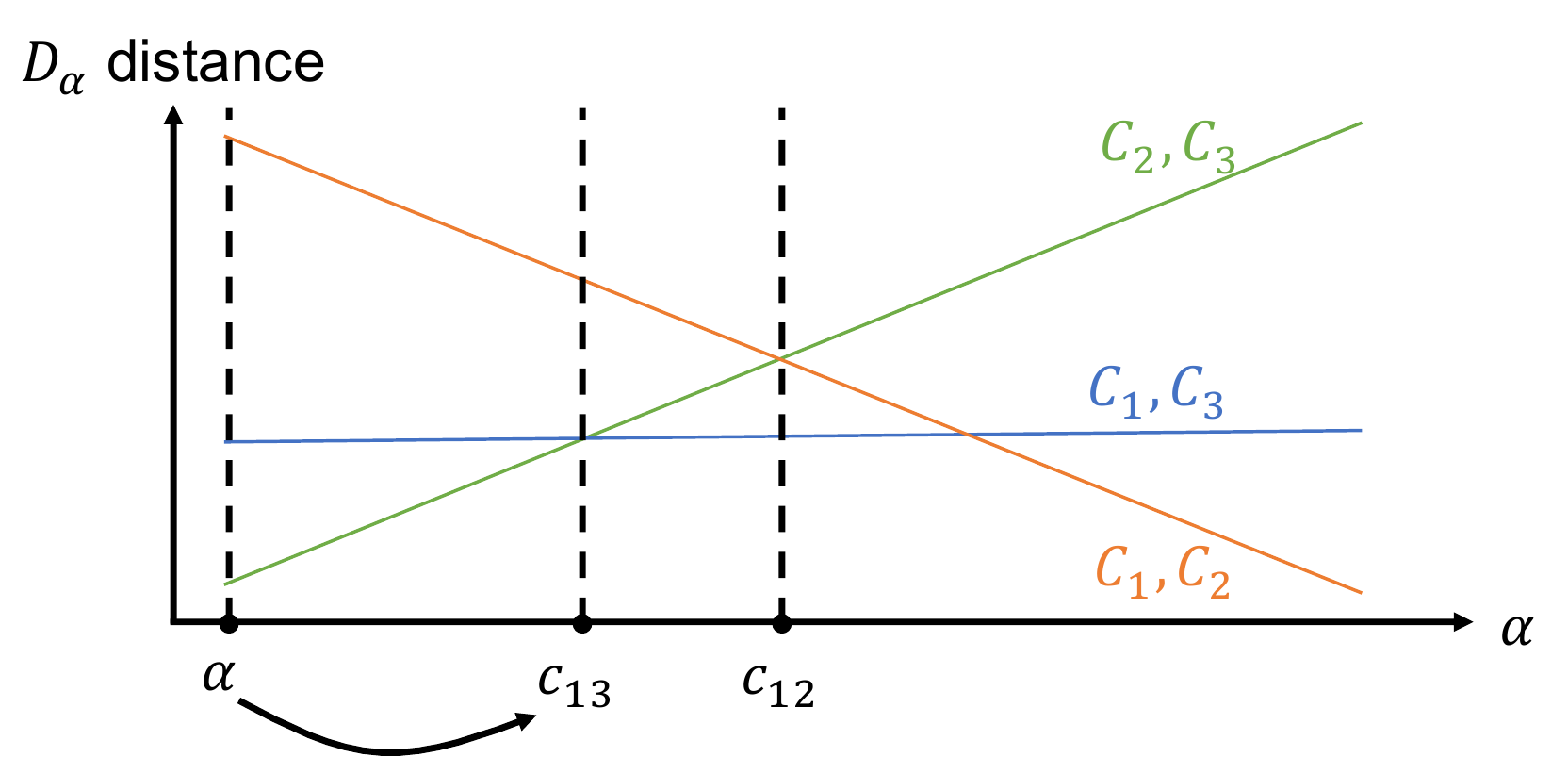}
    \caption{First iteration} \label{fig:sweeplineFirstIter}
  \end{subfigure}
  ~
  \begin{subfigure}[b]{0.35\textwidth}
    \includegraphics[width=0.9\textwidth]{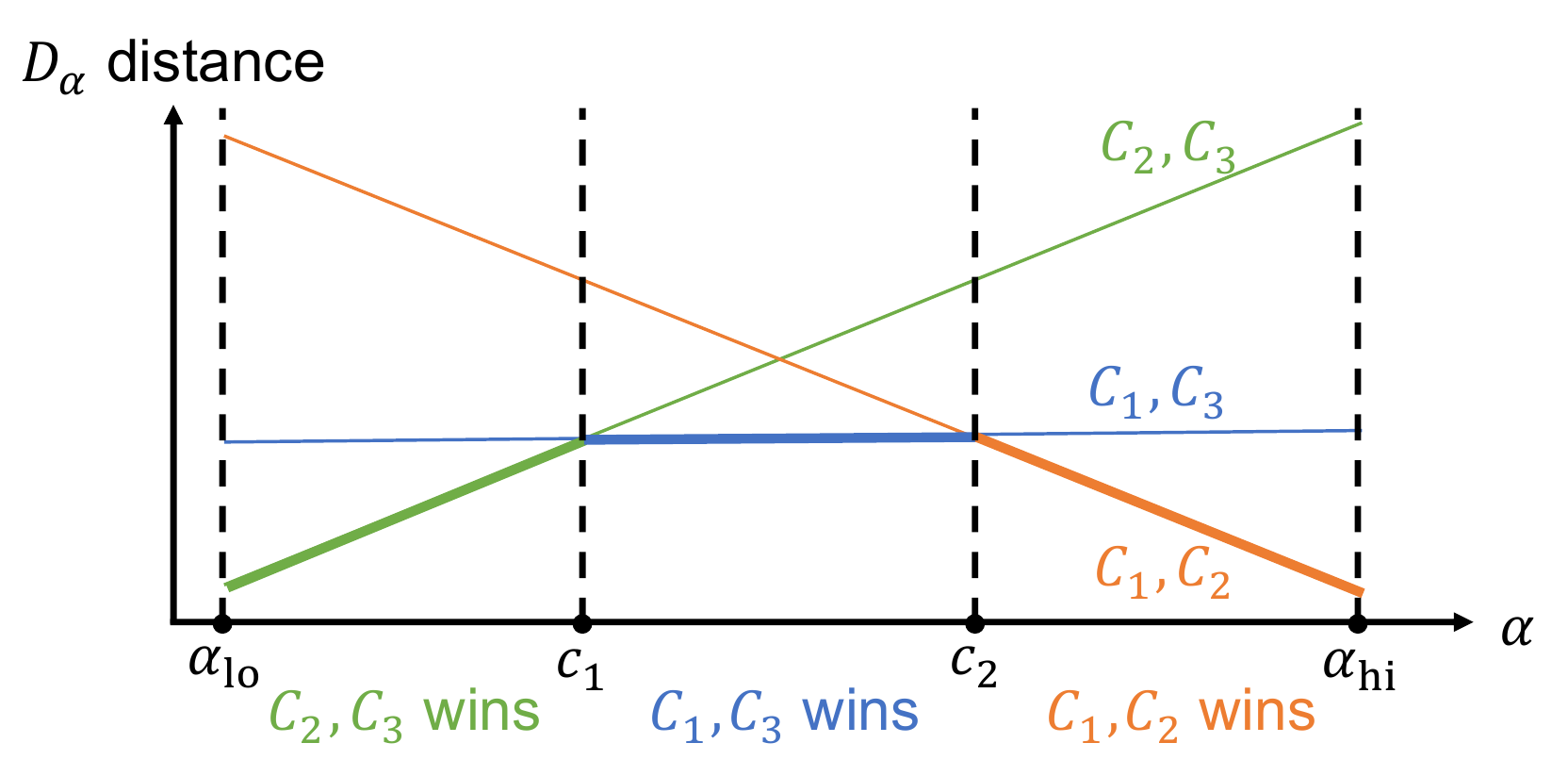}
    \caption{Final output} \label{fig:sweeplineFinalOutput}
  \end{subfigure}
  \caption{Depiction of \Cref{alg:findMerges} when given three clusters, $C_1$,
  $C_2$, and $C_3$. Each line shows the $D_\alpha$-distance between one pair of
  clusters as a function of the parameter $\alpha$. On the first iteration,
  \Cref{alg:findMerges} determines that clusters $C_2$ and $C_3$ are the closest
  for the parameter $\alpha = \alo$, calculates the critical parameter values
  $c_{12}$ and $c_{13}$, and advances $\alpha$ to $c_{13}$. Repeating the
  process partitions of $[\alo, \ahi)$ into merge-constant regions.}
  \label{fig:sweepline}
  \vspace{-0.4cm}
\end{figure}

\begin{restatable}{lem}{lemFindMergesRuntime} \label{lem:findMergesRuntime}
  Let $C_1, \dots, C_m$ be a collection of clusters, $\DZero$ and $\DOne$ be any
  pair of merge functions, and $[\alo, \ahi)$ be a subset of the parameter
  space. If there are $M$ distinct cluster pairs $C_i, C_j$ that minimize
  $\Dalpha(C_i, C_j)$ for values of $\alpha \in [\alo, \ahi)$, then the running
  time of \Cref{alg:findMerges} is $O(M m^2 K)$, where $K$ is the cost of
  evaluating the merge functions $\DZero$ and $\DOne$.
\end{restatable}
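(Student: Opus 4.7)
The plan is to analyze Algorithm~\ref{alg:findMerges} by (i) showing that its outer sweep-line loop executes exactly $M$ iterations, one per merge-constant subinterval of $[\alo, \ahi)$, and (ii) bounding the per-iteration work at $O(m^2 K)$. Multiplying these two bounds yields the claimed $O(M m^2 K)$ running time.

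First, I would show that each iteration of the sweep-line loop produces one output interval of the final partition of $[\alo, \ahi)$ into merge-constant regions. At the start of an iteration with current parameter $\alpha$, the algorithm identifies the merging pair $(C_i, C_j)$ minimizing $D_\alpha(C_i, C_j)$. Because $D_\alpha$ is a linear function of $\alpha$ (it is a convex combination of $\DZero$ and $\DOne$), for every alternative pair $(C_k, C_\ell)$ the equation $D_\alpha(C_i, C_j) = D_\alpha(C_k, C_\ell)$ is linear in $\alpha$ and has at most one solution. The algorithm finds $c$, the smallest such solution strictly greater than the current $\alpha$, and then declares $(C_i, C_j)$ to be the minimizing pair on $[\alpha, c)$. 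Since $D_\alpha(C_i, C_j)$ remains smaller than every alternative throughout $[\alpha, c)$ by linearity and the choice of $c$, this interval is correctly identified as a single merge-constant region. Hence the loop emits exactly one interval per iteration, and iterates exactly $M$ times by the assumption on the number of distinct minimizing pairs.

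Next, I would bound the work per iteration. Finding the current minimizer $(C_i, C_j)$ at the starting parameter requires evaluating $D_\alpha(C_k, C_\ell)$ for all $O(m^2)$ pairs, each at cost $O(K)$, yielding $O(m^2 K)$. Computing $c$ requires solving the linear equation $D_\alpha(C_i, C_j) = D_\alpha(C_k, C_\ell)$ for each of the $O(m^2)$ alternative pairs and taking the minimum solution that exceeds the current $\alpha$; each such equation is determined by a constant number of evaluations of $\DZero$ and $\DOne$ on the relevant pairs of clusters, and so can be set up and solved in $O(K)$ time. Thus the per-iteration cost is $O(m^2 K)$. (The initial enumeration at $\alpha = \alo$ before the loop contributes only an additional $O(m^2 K)$, which is absorbed into the first iteration.)

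Combining the two bounds gives total running time $O(M m^2 K)$, as claimed. The main conceptual obstacle is really step (i), namely verifying that the sweep-line procedure never revisits a merge-constant interval and never misses one; this reduces cleanly to the linearity of $\alpha \mapsto D_\alpha(A, B)$ and the fact that the first crossing of the current minimizer with any other linear function yields the next region boundary. Once this structural fact is in place, the arithmetic that produces the $O(M m^2 K)$ bound is immediate.
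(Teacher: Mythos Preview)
Your proposal is correct and follows essentially the same approach as the paper: bound the number of loop iterations by $M$ and the per-iteration cost by $O(m^2 K)$. If anything, you supply more justification than the paper does, since you explain via linearity of $\alpha \mapsto D_\alpha(A,B)$ why the sweep-line correctly identifies each merge-constant interval exactly once, whereas the paper simply asserts that the loop runs $M$ times.
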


With this, our algorithm for computing the piecewise constant loss function for
an instance $S$ performs a depth-first traversal of the leaves of the execution
tree for $\mergeFamily(\DZero, \DOne)$, using \Cref{alg:findMerges} to determine
the children of each node. When we reach a leaf in the depth-first traversal, we
have both the corresponding parameter interval $I \subset [0,1]$, as well as the
cluster tree $T$ such that $\mergeAlg{\alpha}(S) = T$ for all $\alpha \in I$. We
then evaluate the loss $\ell(T, \target)$ to get one piece of the piecewise
constant loss function. Detailed pseudocode for this approach is given in
\Cref{alg:depthFirst} in \Cref{app:ermAlgorithms}.

\begin{restatable}{thm}{thmDepthFirstRuntime} \label{lem:depthFirstRuntime}
  Let $S = \{x_1, \dots, x_n\}$ be a clustering instance and $\DZero$ and
  $\DOne$ be any two merge functions. Suppose that the execution tree of
  $\mergeFamily(\DZero ,\DOne)$ on $S$ has $E$ edges. Then the total running
  time of \Cref{alg:depthFirst} is $O(E n^2 K)$, where $K$ is the cost of
  evaluating $\DZero$ and $\DOne$ once.
\end{restatable}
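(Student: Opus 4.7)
The plan is to perform an amortized analysis of the depth-first traversal of the execution tree, charging each call to \Cref{alg:findMerges} to the edges it generates, and then noting that the loss computations at the leaves fit within the same bound. The algorithm visits each node of the execution tree exactly once: at each internal node $v$ it invokes \Cref{alg:findMerges} to enumerate the children, and at each leaf it computes the loss of the resulting cluster tree.

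For the dominant cost, I would consider any internal node $v$ in the execution tree. Since linkage clustering begins with $n$ singleton clusters and each merge step reduces the cluster count by one, the number of clusters $m_v$ at $v$ satisfies $m_v \leq n$. If $v$ has $M_v$ children, \Cref{lem:findMergesRuntime} bounds the cost of the findMerges call at $v$ by $O(M_v m_v^2 K) = O(M_v n^2 K)$. Summing over all internal nodes yields a total findMerges cost of $O(n^2 K \sum_v M_v)$. Each edge of the execution tree contributes exactly once to this sum, from its parent endpoint, so $\sum_v M_v = E$ and the total findMerges cost is $O(E n^2 K)$.

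Next, I would account for the remaining work: leaf loss computations and the $O(1)$ per-node bookkeeping of the depth-first traversal. The number of leaves is at most $E + 1$, and for a cluster tree with $n$ leaves the Hamming-distance loss $\ell(T, \target)$ can be evaluated in polynomial time (e.g., $O(n^2)$ using dynamic programming over prunings of $T$ together with a matching step over the at most $k \leq n$ target clusters), so the total leaf cost is $O(E n^2)$, which is absorbed into the $O(E n^2 K)$ bound since $K \geq 1$. Likewise, the stack and bookkeeping overhead of the traversal contributes only $O(1)$ work per edge and is dominated.

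The main obstacle I expect is the careful accounting: verifying the identity $\sum_v M_v = E$ without double-counting, confirming the cluster-count bound $m_v \leq n$ along every root-to-leaf path of the execution tree, and giving a clean polynomial bound on the leaf loss computation so that it does not inflate past $O(E n^2 K)$. None of these steps is conceptually deep, but each requires some care to state precisely enough to compose with \Cref{lem:findMergesRuntime}.
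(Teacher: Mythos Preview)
Your proposal is correct and takes essentially the same approach as the paper: bound the cost at each internal node by $O(M_v n^2 K)$ via \Cref{lem:findMergesRuntime} using $m_v \leq n$, then sum $\sum_v M_v = E$ over all internal nodes. The paper's proof is terser and omits your discussion of leaf-level loss computation (indeed, \Cref{alg:depthFirst} as written only records the cluster tree and interval at each leaf rather than evaluating $\ell$), but the core amortized-edge argument is identical.
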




We can express the running time of \Cref{alg:depthFirst} in terms of the number
of discontinuities of the function $\alpha \mapsto \mergeAlg{\alpha}(S)$. There
is one leaf of the execution tree for each constant interval of this function,
and the path from the root of the execution tree to that leaf is of length
$n-1$. Therefore, the cost associated with that path is at most $O(K n^3)$ and
enumerating the execution tree to obtain the piecewise constant loss function
for a given instance $S$ spends $O(K n^3)$ time for each constant interval of
$\alpha \mapsto \mergeAlg{\alpha}(S)$. In contrast, the combinatorial approach
of \citet{Balcan17:Learning} requires that we run $\alpha$-linkage once for
every interval in their partition of $[0,1]$, which always contains $O(n^8)$
intervals (i.e., it is a refinement of the piecewise constant partition). Since
each run of $\alpha$-Linkage costs $O(K n^2 \log n)$ time, this leads to a
running time of $O(K n^{10} \log n)$. The key advantage of our approach stems
from the fact that the number of discontinuities of the function $\alpha \mapsto
\mergeAlg{\alpha}(S)$ is often several orders of magnitude smaller than
$O(n^8)$.

\paragraph{Learning the Metric.} Next we present efficient algorithms for
computing the piecewise constant loss function for a single clustering instance
when interpolating between two base metrics and using complete linkage. For a
pair of fixed base metrics $d_0$ and $d_1$ and any parameter value $\beta \in
[0,1]$, define $d_\beta(a,b) = (1-\beta)d_0(a,b) + \beta d_1(a,b)$. Let
$\metricAlg{\beta}(S;d_0, d_1)$ denote the output of running complete linkage
with the metric $d_\beta$, and $\metricFamily(d_0, d_1)$ denote the family of
all such algorithms. We prove that for this algorithm family, the execution tree
is well defined and provide an efficient algorithm for finding the children of
each node in the execution tree, allowing us to use a depth-first traversal to
find the piecewise constant loss function for any clustering instance $S$.

\begin{restatable}{lem}{lemMetricExecutionTree}\label{lem:metricExecutionTree}
  For any metrics $\dZero$ and $\dOne$ and any clustering instance $S$, the
  execution tree for the family $\metricFamily(\dZero, \dOne)$ when run on $S$
  is well defined. That is, there exists a partition tree s.t. for any node $v$
  at depth $t$, the same sequence of first $t$ merges is performed by
  $\metricAlg{\beta}$ for all $\beta$ in node $v$'s interval.
\end{restatable}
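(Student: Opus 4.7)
The plan is to prove Lemma~\ref{lem:metricExecutionTree} by induction on the depth $t$, following the same template as the proof of Lemma~\ref{lem:mergeExecutionTree} but adapting it to the fact that here the merge function $\Dmax$ is held fixed while the metric $d_\beta$ varies. The base case $t=0$ is immediate: the root of the partition tree is labeled with $[0,1]$ and the empty merge sequence, which every $\metricAlg{\beta}$ trivially performs.

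For the inductive step, I would fix a node $v$ at depth $t-1$ with parameter interval $I \subseteq [0,1]$. By the inductive hypothesis, every $\metricAlg{\beta}$ with $\beta \in I$ has executed the same first $t-1$ merges and so arrives at a common cluster collection $C_1, \dots, C_m$; the next merge it chooses is the pair $(C_i, C_j)$ minimizing $\Dmax(C_i, C_j; d_\beta) = \max_{a \in C_i,\, b \in C_j}\bigl[(1-\beta) d_0(a,b) + \beta d_1(a,b)\bigr]$. The key observation is that for any fixed pair of clusters $A, B$, the function $\beta \mapsto \Dmax(A, B; d_\beta)$ is the upper envelope of $|A|\cdot|B|$ linear functions of $\beta$, hence continuous, convex, and piecewise-linear with finitely many pieces. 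Consequently, for any two candidate merges the difference of their $\Dmax$-distances is itself piecewise-linear in $\beta$ and changes sign at only finitely many points of $I$. Gathering these critical points across the $O(m^4)$ pairs of candidate merges and adjoining the endpoints of $I$ produces a finite partition of $I$ on each subinterval of which the ordering of all $\binom{m}{2}$ merge distances is fixed; with a consistent tie-breaking rule (e.g., lexicographic on cluster indices), $\metricAlg{\beta}$ selects the same $t$-th merge throughout each subinterval, and those subintervals become the children of $v$.

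The main obstacle relative to Lemma~\ref{lem:mergeExecutionTree} is that $\Dmax(A, B; d_\beta)$ is no longer linear in the parameter: in the merge-family setting $D_\alpha(A,B;d)$ was literally linear in $\alpha$ for any fixed pair of clusters, so each pairwise comparison had at most one crossing, whereas here the comparisons reduce to sign changes of a difference of two piecewise-linear convex functions. I would therefore need to argue carefully that the upper-envelope structure of $\Dmax$ still yields only finitely many sign changes per comparison, and that ties can be broken uniformly so that the children of $v$ are unambiguously defined. Apart from this extra care, the argument is a direct structural induction and reuses the bookkeeping from the merge-family proof.
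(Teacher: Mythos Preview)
Your proposal is correct and follows the same inductive scaffolding as the paper, but the key step---showing that each node's interval splits into finitely many subintervals on which the next merge is constant---is handled differently. You work at the \emph{cluster level}: you observe that $\Dmax(A,B;d_\beta)$ is the upper envelope of $|A|\cdot|B|$ linear functions, hence piecewise-linear and convex, and then argue that the difference of two such envelopes has only finitely many sign changes. The paper instead drops one level down to \emph{point-pair comparisons}: it notes that complete linkage is determined entirely by the ordering of the pairwise distances $d_\beta(x,x')$, each of which is a single linear function of $\beta$, so any two of them cross at most once and there are at most $O(|S|^4)$ critical values in total. This sidesteps the upper-envelope bookkeeping entirely and yields an explicit, uniform bound on the number of children per node. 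Your route is perfectly valid and would generalize more readily to merge functions whose cluster distance is merely piecewise-linear in $\beta$; the paper's route is shorter for the specific case of complete linkage because it exploits the 2-point-based structure directly.
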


Next, we provide an efficient procedure for determining the children of a node
$v$ in the execution tree of $\metricFamily(\dZero, \dOne)$. Given the node's
parameter interval $I = [\blo, \bhi)$ and the set of clusters $C_1, \dots, C_m$
resulting from that node's sequence of merges, we again use a sweep-line
procedure to find the possible next merges and the corresponding parameter
intervals. First, we determine the pair of clusters that will be merged by
$\metricAlg{\beta}$ for $\beta = \blo$ by enumerating all pairs of clusters.
Suppose the winning pair is $C_i$ and $C_j$ and let $x \in C_i$ and $x' \in C_j$
be the farthest pair of points between the two clusters. Next, we find the
largest value of $\beta'$ for which we will still merge the clusters $C_i$ and
$C_j$. To do this, we enumerate all other pairs of clusters $C_k$ and $C_l$ and
all pairs of points $y \in C_k$ and $y' \in C_l$, and solve the linear equation
$\metricp{\beta'}(x,x') = \metricp{\beta}(y,y')$, keeping track of the minimal
solution larger than $\beta$. Denote the minimal solution larger than $\beta$ by
$c$. We are guaranteed that for all $\beta' \in [\beta, c)$, the pair of
clusters merged will be $C_i$ and $C_j$. Then we repeat the process with $\beta
= c$ to find the next merge and corresponding interval, and so on, until $\beta
\geq \bhi$. Pseudocode for this procedure is given in
\Cref{alg:findMergesMetric} in \Cref{app:ermAlgorithms}. The following Lemma
bounds the running time:

\begin{restatable}{lem}{lemFindMergesRuntimeMetric}\label{lem:findMergesRuntimeMetric}
  Let $C_1, \dots, C_m$ be a collection of clusters, $\dZero$ and $\dOne$ be any
  pair of metrics, and $[\blo, \bhi)$ be a subset of the parameter space. If
  there are $M$ distinct cluster pairs $C_i, C_j$ that complete linkage would
  merge when using the metric $\dbeta$ for $\beta \in [\blo, \bhi)$, the running
  time of \Cref{alg:findMergesMetric} is $O(M n^2)$.
\end{restatable}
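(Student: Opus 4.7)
The plan is to follow the template of \Cref{lem:findMergesRuntime}: bound the work done by one iteration of the outer sweep loop and multiply by the iteration count $M$. The per-iteration bound will be $O(n^2)$, which multiplied by $M$ gives the claimed $O(Mn^2)$ runtime.

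First I would establish the $O(n^2)$ per-iteration cost. The key counting observation is that because $C_1, \dots, C_m$ partition a subset of the $n$-point instance $S$, the total number of cross-cluster point pairs $\sum_{i<j}|C_i|\cdot|C_j|$ is at most $\binom{n}{2} = O(n^2)$. Identifying the winning pair at the current $\beta$ amounts to computing $\max_{a \in C_i, b \in C_j} d_\beta(a,b)$ for every cluster pair, which enumerates each cross-cluster point pair exactly once and thus costs $O(n^2)$; the realizing farthest point pair $(x,x')$ for the winning pair is obtained as a by-product. The sweep step then solves, for each alternative cross-cluster point pair $(y,y')$, a single linear equation $d_{\beta'}(x,x') = d_{\beta'}(y,y')$ and retains the smallest solution exceeding $\beta$. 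Each such equation is solvable in $O(1)$ since $d_{\beta'}$ is linear in $\beta'$, and the enumeration again ranges over $O(n^2)$ cross-cluster point pairs, so the sweep step also runs in $O(n^2)$.

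Next I would justify that the outer loop terminates in exactly $M$ iterations. \Cref{lem:metricExecutionTree} guarantees that the map $\beta \mapsto $ (merged cluster pair) is piecewise constant on $[\blo, \bhi)$ and by hypothesis takes $M$ distinct values, so advancing $\beta$ from one constant subinterval to the next gives $M$ iterations. Monotone progress follows from the fact that the difference $d_\beta(x,x') - d_\beta(y,y')$ between any two fixed point pairs is linear in $\beta$ and hence vanishes at most once, so a standard upper-envelope argument shows that no cluster pair once strictly beaten can regain optimality later in $[\blo, \bhi)$. The main subtlety I anticipate---and the only real obstacle---is handling the fact that the point pair realizing the complete linkage distance inside the winning cluster pair $(C_i, C_j)$ is itself piecewise constant in $\beta$; arguing that tracking a single $(x,x')$ between consecutive critical values suffices amounts to observing that any internal shift of the argmax inside $(C_i, C_j)$ corresponds to a tie already enumerated by the sweep, which does not alter the identity of the winning cluster pair and therefore does not inflate the $M$-iteration count.
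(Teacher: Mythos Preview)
Your overall plan—bound the per-iteration cost by $O(n^2)$ and multiply by the iteration count—matches the paper's proof, and your per-iteration analysis is correct and a bit more explicit than the paper's (which simply asserts $O(n^2)$ work per iteration, citing precomputed distance matrices).

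Where your proposal goes beyond the paper is in trying to \emph{justify} that the sweep loop runs exactly $M$ times when $M$ is read literally as the number of distinct winning cluster pairs. Two steps in that justification are incorrect. First, your ``upper-envelope'' claim that a cluster pair, once strictly beaten, cannot regain optimality is false here: the complete-linkage distance $\Dmax(C_i,C_j;d_\beta)=\max_{a\in C_i,\,b\in C_j}d_\beta(a,b)$ is a \emph{convex} piecewise-linear function of $\beta$ (a max of lines, not a single line), and the lower envelope of several convex functions can return to the same function on disjoint intervals. (Concretely, a pair with constant $\Dmax\equiv c$ can win for small $\beta$, lose to a pair whose $\Dmax$ dips below $c$ in the middle, and win again for large $\beta$.) Second, your resolution of the internal-argmax subtlety does not work: an internal shift of the realizing pair inside the current winner $(C_i,C_j)$ at some $\beta^\ast$ \emph{is} one of the enumerated crossings $c_{yy'}$, and if it is the minimal one the loop advances to $\beta^\ast$ and starts a fresh iteration that again selects $(C_i,C_j)$—so it \emph{does} add an iteration even though the winning cluster pair is unchanged.

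The paper's proof sidesteps both issues by not arguing about distinct cluster pairs at all: it just equates $M$ with the number of sweep iterations (equivalently, the number of subintervals returned), which is how $M$ is then used in \Cref{lem:depthFirstRuntimeMetric} as the out-degree of the execution-tree node. If you adopt that reading of $M$, your first paragraph already proves the lemma and your second paragraph can be dropped.
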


Our algorithm for computing the piecewise constant loss function for an instance
$S$ is almost identical for the case of the merge function: it performs a
depth-first traversal of the leaves of the execution tree for
$\metricFamily(\dZero, \dOne)$, using \Cref{alg:findMergesMetric} to determine
the children of each node. Detailed pseudocode for this approach is given in
\Cref{alg:depthFirstMetric} in \Cref{app:ermAlgorithms}. The following Theorem
characterizes the overall running time of the algorithm.

\begin{restatable}{thm}{thmDepthFirstRuntimeMetric} \label{lem:depthFirstRuntimeMetric}
  Let $S = \{x_1, \dots, x_n\}$ be a clustering instance and $\dZero$ and
  $\dOne$ be any two merge functions. Suppose that the execution tree of
  $\metricFamily(\dZero ,\dOne)$ on $S$ has $E$ edges. Then the total running
  time of \Cref{alg:depthFirstMetric} is $O(E n^2)$.
\end{restatable}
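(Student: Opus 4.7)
The plan is to charge the total work performed by \Cref{alg:depthFirstMetric} to the edges of the execution tree. At any internal node $v$ of the tree, the dominant cost is the single call to \Cref{alg:findMergesMetric} that takes the subinterval $[\blo, \bhi)$ and the current cluster list $C_1, \dots, C_{m_v}$ at $v$ and returns $v$'s children together with their subintervals. By \Cref{lem:findMergesRuntimeMetric} this call runs in $O(M_v\, n^2)$ time, where $M_v$ is the number of children of $v$ in the execution tree; the $n^2$ factor already absorbs the work of enumerating point pairs across the $m_v \leq n$ clusters present at $v$.

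Summing these per-node bounds across the depth-first traversal, the total cost of determining children is $O\left(n^2 \sum_{v} M_v\right)$. Since every non-root node of the execution tree is a child of exactly one parent, $\sum_{v} M_v = E$, so this contribution is $O(E\, n^2)$, matching the claimed bound.

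Next I would account for the remaining bookkeeping. When the depth-first traversal descends along an edge it performs the corresponding merge, combining two subtrees into one; when it backtracks it undoes that merge. Both operations cost $O(n)$, for a total of $O(E n)$ over the whole traversal. At each of the at most $E+1$ leaves, we evaluate $\ell(T, \target)$ between the final cluster tree $T$ and the target clustering $\target$; this can be computed in $O(n^2)$ time by a standard bottom-up dynamic program over $T$ (identical to the one used in \Cref{alg:depthFirst}), contributing $O(E n^2)$ in total. All of these terms fit within $O(E n^2)$.

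The main obstacle is verifying that all per-node overhead is indeed absorbed into the call to \Cref{alg:findMergesMetric}, so that the telescoping identity $\sum_v M_v = E$ yields the desired bound, and that the per-leaf loss evaluation does not inflate the exponent. Once both points are pinned down the theorem follows immediately by applying \Cref{lem:findMergesRuntimeMetric} at every internal node and summing.
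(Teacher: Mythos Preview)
Your proposal is correct and follows the same approach as the paper: apply \Cref{lem:findMergesRuntimeMetric} at each internal node $v$ to get an $O(M_v n^2)$ bound, then use $\sum_v M_v = E$ to sum to $O(E n^2)$. The paper's own proof is actually terser than yours---it stops after the summation step and does not separately account for the merge/backtrack bookkeeping or leaf-level work (indeed, \Cref{alg:depthFirstMetric} as written only returns the trees and intervals without evaluating $\ell$)---so your extra accounting is harmless but unnecessary for the theorem as stated.
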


\paragraph{General algorithm families.} Our efficient algorithm selection
procedures have running time that scales with the true number of discontinuities
in each loss function, rather than a worst-case upper bound. The two special
cases we study each have one-dimensional parameter spaces, so the partition at
each level of the tree always consists of a set of intervals. This approach can
be extended to the case when we have multiple merge functions and metrics,
except now the partition at each node in the tree will be a sign-pattern
partition induced by quadratic functions.

\section{Experiments} \label{sec:experiments}
In this section we evaluate the performance of our learning procedures when
finding algorithms for application-specific clustering distributions. Our
experiments demonstrate that the best algorithm for different applications
varies greatly, and that in many cases we can have large gains in cluster
quality using a mixture of base merge functions or metrics.

\paragraph{Experimental setup.} In each experiment we define a distribution
$\mathcal{D}$ over clustering tasks. For each clustering instance, the loss of
the cluster tree output by a clustering algorithm is measured in terms of the
loss $\ell(S, \target)$, which computes the Hamming distance between the target
clustering and the closest pruning of the cluster tree. We draw $N$ sample
clustering tasks from the given distribution and use the algorithms developed in
\Cref{sec:ermAlgorithms} to exactly compute the average empirical loss for every
algorithm in one algorithm family. The theoretical results from
\Cref{sec:jointFamily} ensure that these plots generalize to new samples from
the same distribution, so our focus is on demonstrating empirical improvements
in clustering loss obtained by learning the merge function or metric.

\paragraph{Clustering distributions.} Most of our clustering distributions are
generated from classification datasets by sampling a subset of the dataset and
using the class labels as the target clustering. We briefly describe our
instance distributions together with the metrics used for each below. Complete
details for the distributions can be found in \Cref{app:experiments}.

\itparagraph{MNIST Subsets.} The MNIST dataset \citep{LeCun98:MNIST} contains
images of hand-written digits from $0$ to $9$. We generate a random clustering
instance from this data by choosing $k = 5$ random digits and sampling $200$
images from each digit, giving a total of $n = 1000$ images. We measure distance
between any pairs of images using the Euclidean distance between their pixel
intensities.

\itparagraph{CIFAR-10 Subsets.} We similarly generate clustering instances from
the CIFAR-10 dataset~\citep{Krizhevsky09:CIFAR}. To generate an instance, we
select $k = 5$ classes at random and then sample $50$ images from each class,
leading to a total of $n = 250$ images. We measure distances between examples
using the cosine distance between feature embedding extracted from a pre-trained
Google inception network~\citep{Szegedy15:Inception}.

\itparagraph{Omniglot Subsets.} The Omniglot dataset~\citep{Lake15:Omniglot}
contains written characters from 50 alphabets, with a total of 1623 different
characters. To generate a clustering instance from the omniglot data, we choose
one of the alphabets at random, we sample $k$ from $\{5, \dots, 10\}$ uniformly
at random, choose $k$ random characters from the alphabet, and include all 20
examples of those characters in the clustering instance. We use two metrics for
the omniglot data: first, cosine distances between neural network feature
embeddings of the character images from a simplified version of
AlexNet~\citep{Krizhevsky12:Imagenet}. Second, each character is also described
by a ``stroke'', which is a sequence of coordinates $(x_t, y_t)_{t=1}^T$
describing the trajectory of the pen when writing the character. We hand-design
a metric based on the stroke data: the distance between a pair of characters is
the average distance from a point on either stroke to its nearest neighbor on
the other stroke. A formal definition is given in the appendix.

\itparagraph{Places2 Subsets.} The Places2 dataset consists of images of 365
different place categories, including ``volcano'', ``gift shop'', and
``farm''~\citep{Zhou17:Places}. To generate a clustering instance from the
places data, we choose $k$ randomly from $\{5, \dots, 10\}$, choose $k$ random
place categories, and then select $20$ random examples from each chosen
category. We use two metrics for this data distribution. First, we use cosine
distances between feature embeddings generated by a VGG16
network~\citep{Simonyan15:VGG} pre-trained on ImageNet~\citep{Deng09:Imagenet}.
Second, we compute color histograms in HSV space for each image and use the
cosine distance between the histograms.

\itparagraph{Places2 Diverse Subsets.} We also construct an instance
distribution from a subset of the Places2 classes which have diverse color
histograms. We expect the color histogram metric to perform better on this
distribution. To generate a clustering instance, we pick $k = 4$ classes from
aquarium, discotheque, highway, iceberg, kitchen, lawn, stage-indoor, underwater
ocean deep, volcano, and water tower. We include $50$ randomly sampled images
from each chosen class, leading to a total of $n = 200$ points per instance.

\itparagraph{Synthetic Rings and Disks.} We consider a two
dimensional synthetic distribution where each clustering instance has 4
clusters, where two are ring-shaped and two are disk-shaped. To generate each
instance we sample 100 points uniformly at random from each ring or disk. The
two rings have radiuses $0.4$ and $0.8$, respectively, and are both centered at
the origin. The two disks have radius $0.4$ and are centered at $(1.5, 0.4)$ and
$(1.5, -0.4)$, respectively. For this data, we measure distances between points
in terms of the Euclidean distance between them.

\paragraph{Results.} \textit{Learning the Merge Function.}
Figure~\ref{fig:merge} shows the average loss when interpolating between single
and complete linkage as well as between average and complete linkage for each of
the clustering instance distributions described above. For each value of the
parameter $\alpha \in [0,1]$, we report the average loss over $N = 1000$ i.i.d.
instances drawn from the corresponding distribution. We see that the optimal
parameters vary across different clustering instances. For example, when
interpolating between single and complete linkage, the optimal parameters are
$\alpha = 0.874$ for MNIST, $\alpha = 0.98$ for CIFAR-10, $\alpha = 0.179$ for
Rings and Disks, and $\alpha = 0.931$ for Omniglot. Moreover, using the
parameter that is optimal for one distribution on another would lead to
significantly worse clustering performance. Next, we also see that for different
distributions, it is possible to achieve non-trivial improvements over single,
complete, and average linkage by interpolating between them. For example, on the
Rings and Disks distribution we see an improvement of almost $0.2$ error,
meaning that an additional 20\% of the data is correctly clustered.

\begin{figure}[h]
  \centering
  \begin{subfigure}[b]{0.235\textwidth}
    \includegraphics[width=\textwidth]{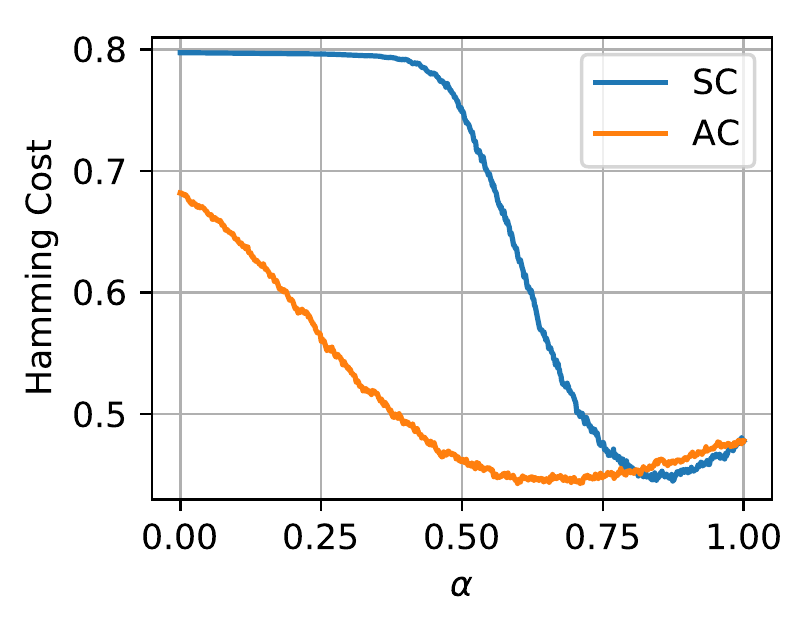}
    \caption{MNIST} \label{fig:mnistMerge}
  \end{subfigure}
  ~
  \begin{subfigure}[b]{0.235\textwidth}
    \includegraphics[width=\textwidth]{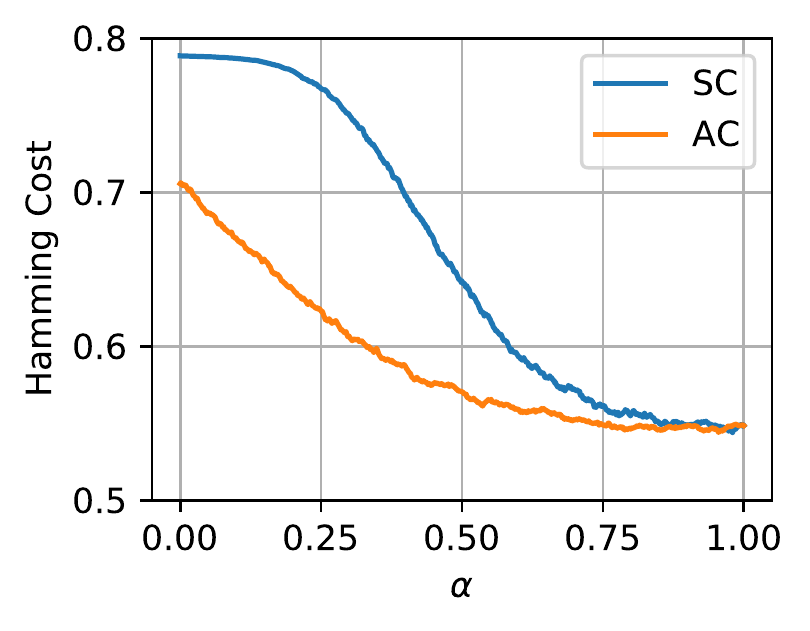}
    \caption{CIFAR-10} \label{fig:cifar10Merge}
  \end{subfigure}
  ~
  \vspace{-0.2em}
  \begin{subfigure}[b]{0.235\textwidth}
    \includegraphics[width=\textwidth]{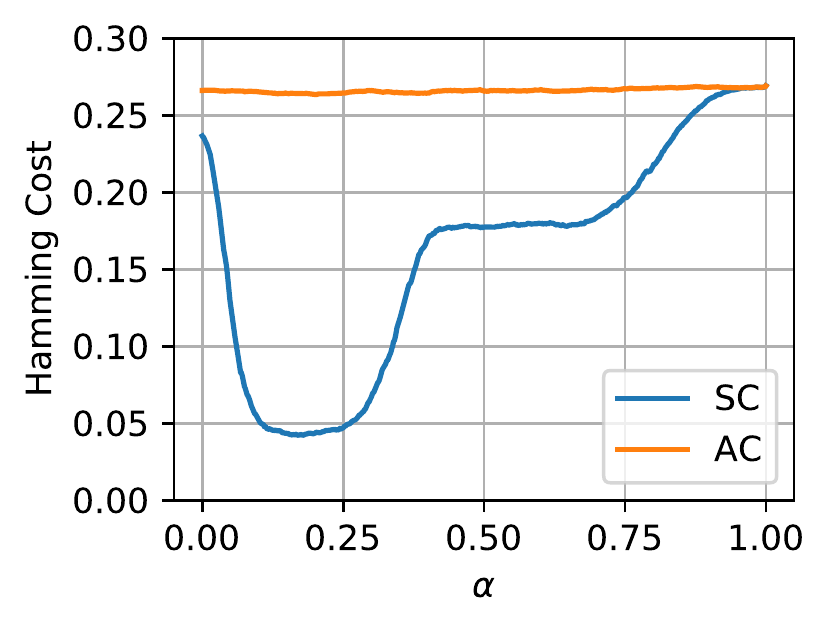}
    \caption{Rings and Disks} \label{fig:ringsAndDiscs}
  \end{subfigure}
  ~
  \begin{subfigure}[b]{0.235\textwidth}
    \includegraphics[width=\textwidth]{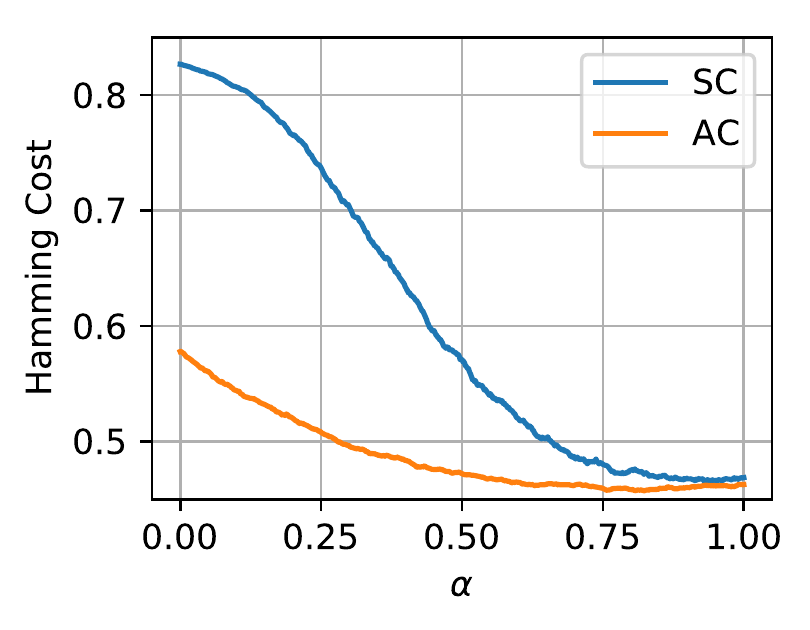}
    \caption{Omniglot} \label{fig:omniglotMerge}
  \end{subfigure}
  \caption{Empirical loss for interpolating between single and complete linkage
  (`SC' in the legend) as well as between average and complete linkage (`AC' in
  the legend) over 1000 sampled clustering instances.} \label{fig:merge}
\end{figure}

\itparagraph{Learning the Metric.} Next we consider learning the best metric for
the Omniglot, Places2, and Places2 Diverse instance distributions. Each of these
datasets is equipped with one hand-designed metric and one metric based on
neural-network embeddings. The parameter $\beta = 0$ corresponds to the
hand-designed metric, while $\beta = 1$ corresponds to the embedding.
\Cref{fig:omniglotMetric} shows the empirical loss for each parameter $\beta$
averaged over $N = 4000$ samples for each distribution. On all three
distributions the neural network embedding performs better than the
hand-designed metric, but we can achieve non-trivial performance improvements by
mixing the two metrics. On Omniglot, the optimal parameter is at $\beta = 0.514$
which improves the Hamming error by $0.091$, meaning that we correctly cluster
nearly $10\%$ more of the data. For the Places2 distribution we see an
improvement of approximately $1\%$ with the optimal parameter being $\beta =
0.88$, while for the Places2 Diverse distribution the improvement is
approximately $3.4\%$ with the optimal $\beta$ being $0.87$.

\begin{figure}[h]
  \centering
  \begin{subfigure}[t]{0.235\textwidth}
    \includegraphics[width=\textwidth]{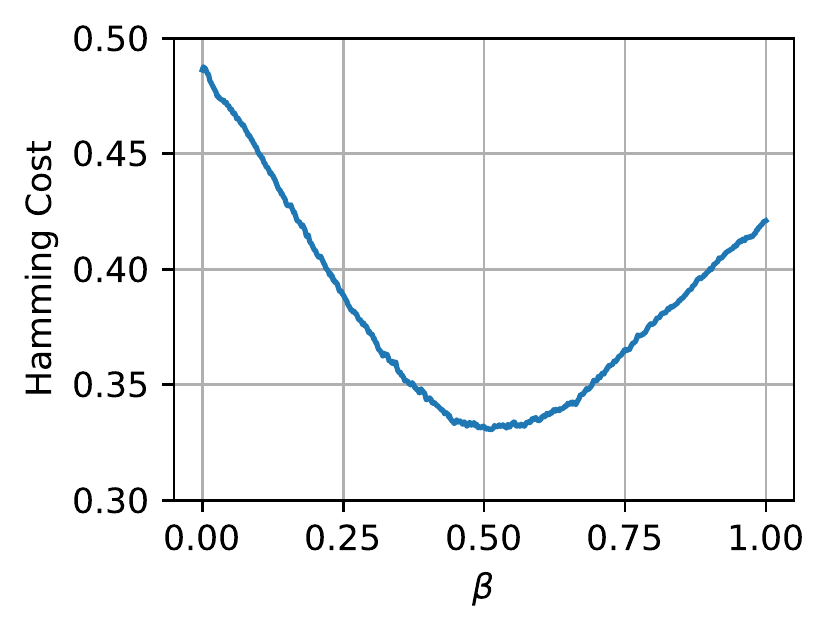}
    \caption{Omniglot}
  \end{subfigure}
  ~
  \begin{subfigure}[t]{0.235\textwidth}
    \includegraphics[width=\textwidth]{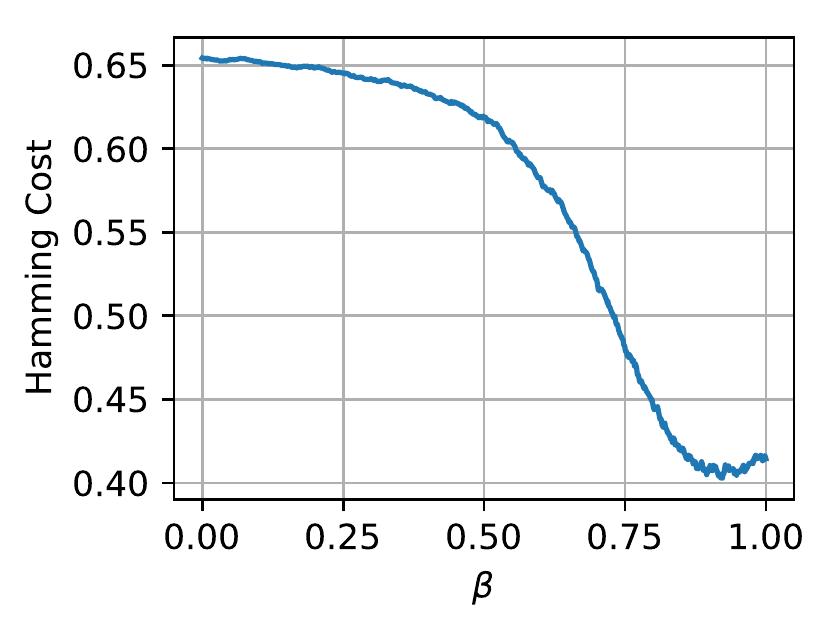}
    \caption{Places2}
  \end{subfigure}
  ~
  \begin{subfigure}[t]{0.235\textwidth}
    \includegraphics[width=\textwidth]{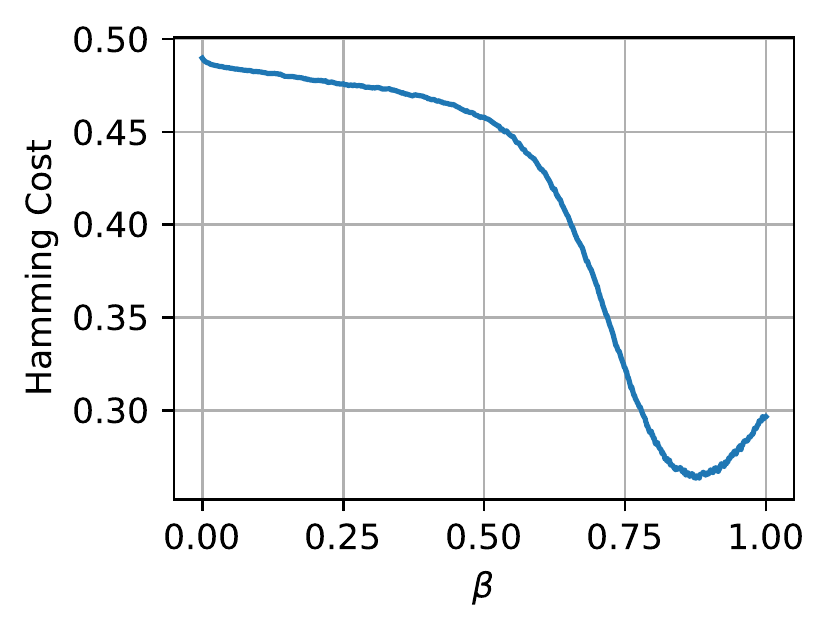}
    \caption{Places2 Diverse}
  \end{subfigure}
  \caption{Empirical loss interpolating between two distance metrics on
  Omniglot, Places2, and Places2 distributions. In each plot, $\beta = 0$
  corresponds to the hand-crafted metric and $\beta = 1$ corresponds to the
  neural network embedding.} \label{fig:omniglotMetric}
\end{figure}

\textit{Number of Discontinuities.} The efficiency of our algorithm selection
procedures stems from the fact that their running time scales with the true
number of discontinuities in each loss function, rather than a worst-case upper
bound. Of all the experiments we ran, interpolating between single interpolating
between single and complete linkage for MNIST had the most discontinuities per
loss function with an average of $362.6$ discontinuities per function. Given
that these instances have $n = 1000$ points, this leads to a speedup of roughly
$n^8 / 362.8 \approx 5.5 \times 10^{15}$ over the combinatorial algorithm that
solves for all $O(n^8)$ critical points and runs the clustering algorithm once
for each. \Cref{table:discontinuities} in \Cref{app:experiments} shows the
average number of discontinuities per loss function for all of the above
experiments.

\section{Conclusion} \label{sec:conclusion}
In this work we study both the sample and algorithmic complexity of learning
linkage-based clustering algorithms with low loss for specific application
domains. We give strong bounds on the number of sample instances required from
an application domain in order to find an approximately optimal algorithm from a
rich family of algorithms that allows us to vary both the metric and merge
function used by the algorithm. We complement our sample complexity results with
efficient algorithms for finding empirically optimal algorithms for a sample of
instances. Finally, we carry out experiments on both real-world and synthetic
clustering domains demonstrating that our procedures can often find algorithms
that significantly outperform standard linkage-based clustering algorithms.

\section*{Acknowledgements}

This work was supported in part by NSF grants CCF-1535967, IIS-1618714, an
Amazon Research Award,  a Bloomberg Research Grant, a Microsoft Research Faculty
Fellowship, and by the generosity of Eric and Wendy Schmidt by recommendation of
the Schmidt Futures program.

\bibliographystyle{plainnat}
\bibliography{references}

\appendix

\section{Appendix for Learning Clustering Algorithms} \label{app:jointFamily}
We begin by providing complete proofs for the piecewise structural Lemmas from
the main body.

\lemDistanceOrdering*
\begin{proof}
  Let $S$ be any clustering instance and fix points $a, b, a', b' \in S$. For
  any parameter $\vec{\beta} \in \Delta_\numMetrics$, by definition of
  $d_{\vec{\beta}}$, we have that
  \[
    d_{\vec{\beta}}(a, b) \leq d_{\vec{\beta}}(a', b')
    \iff
    \sum_{i=1}^\numMetrics \beta_i d_i(a,b) \leq \sum_{i=1}^\numMetrics \beta_i d_i(a',b')
    \iff
    \sum_{i=1}^\numMetrics \beta_i (d_i(a,b) - d_i(a',b')) \leq 0.
  \]
  Define the linear function $h_{a,b,a',b'}(\vec{\beta}) =
  \sum_{i=1}^\numMetrics \beta_i (d_i(a,b) - d_i(a',b'))$. Then we have that
  $d_{\vec{\beta}}(a, b) \leq d_{\vec{\beta}}(a', b')$ if
  $h_{a,b,a',b'}(\vec{\beta}) \leq 0$ and $d_{\vec{\beta}}(a, b) >
  d_{\vec{\beta}}(a', b')$ if $h_{a,b,a',b'}(\vec{\beta}) > 0$.

  Let $\cH = \{h_{a,b,a',b'} \mid a,b,a',b' \in S\}$ be the collection of all
  such linear functions collected over all possible subsets of $4$ points in
  $S$. Now suppose that $\vec{\beta}$ and $\vec{\beta}'$ belong to the same
  region in the sign-pattern partition induced by $\cH$. For any points $a, b,
  a', b' \in S$, we are guaranteed that $\sign(h_{a,b,a',b'}(\vec{\beta})) =
  \sign(h_{a,b,a',b'}(\vec{\beta}'))$, which by the above arguments imply that
  $d_{\vec{\beta}}(a,b) \leq d_{\vec{\beta}}(a',b')$ iff $d_{\vec{\beta}'}(a,b)
  \leq d_{\vec{\beta}'}(a',b')$, as required.
\end{proof}

\lemQuadraticPartition*
\begin{proof}
  From \Cref{lem:distanceOrdering}, we know we can find a set $\cH$ of
  $O(|S|^4)$ linear functions defined on $\reals^\numMetrics$ that induce a
  sign-pattern partition of the $\vec{\beta}$ parameter space
  $\Delta_\numMetrics \subset \reals^{\numMetrics}$ into regions where the
  ordering over pairs of points according to the $d_{\vec{\beta}}$ distance is
  constant.


  Now let $\region \subset \Delta_\numMetrics$ be any region of the sign-pattern
  partition of $\Delta_\numMetrics$ induced by $\cH$. From
  \Cref{lem:distanceOrdering}, we know that for all parameters $\vec{\beta} \in
  \region$, the ordering over pairs of points in $S$ according to
  $d_{\vec{\beta}}$ is fixed. For any 2-point-based merge function, the pair of
  points used to measure the distance between a pair of clusters depends only on
  the ordering of pairs of points according to distance. Therefore, since $D_1,
  \dots, D_\numMerges$ are all 2-point-based, we know that for any pair of
  clusters $(A,B)$ and each merge function index $i \in [\numMerges]$, there
  exists a pair of points $(a_i, b_i) \in A \times B$ such that $D_i(A, B;
  d_{\vec{\beta}}) = d_{\vec \beta}(a_i, b_i)$ for all $\vec{\beta} \in
  \region$. In other words, all of the merge functions measure distances between
  $A$ and $B$ using a fixed pair of points for all values of the metric
  parameter $\vec{\beta}$ in the region $\region$. Similarly, let $A', B'
  \subset S$ be any other pair of clusters and $(a_i', b_i') \in A' \times B'$
  be the pairs of points defining $D_i(A', B'; d_{\vec{\beta}})$ for each $i \in
  [\numMerges]$. Then for all $\vec{\beta} \in \region$, we have that
  \begin{align*}
  D_{\vec{\alpha}}(A, B; d_{\vec{\beta}}) \leq D_{\vec{\alpha}}(A', B'; d_{\vec{\beta}})
  &\iff \sum_{i=1}^\numMerges \alpha_i D_i(A, B; d_{\vec{\beta}}) \leq \sum_{i=1}^\numMerges \alpha_i D_i(A, B; d_{\vec{\beta}}) \\
  &\iff \sum_{i=1}^\numMerges  \alpha_i \sum_{j=1}^\numMetrics \beta_j d_j(a_i, b_i) \leq \sum_{i=1}^\numMerges  \alpha_i \sum_{j=1}^\numMetrics \beta_j d_j(a_i', b_i') \\
  &\iff \sum_{i=1}^\numMerges \sum_{j=1}^\numMetrics \alpha_i \beta_j \bigl(d_j(a_i, b_i) - d_j(a_i', b_i')\bigr) \leq 0.
  \end{align*}
  Now define the quadratic function
  \begin{equation}
    q_{A,B,A',B'}(\vec{\alpha}, \vec{\beta}) = \sum_{i=1}^\numMerges \sum_{j=1}^\numMetrics \alpha_i \beta_j \bigl(d_j(a_i, b_i) - d_j(a_i', b_i')\bigr). \label{eq:clusterQuad}
  \end{equation}
  For all $\vec{\beta} \in \region$, we are guaranteed that $D_{\vec{\alpha}}(A,
  B; d_{\vec{\beta}}) \leq D_{\vec{\alpha}}(A', B'; d_{\vec{\beta}})$ if and
  only if $q_{A,B,A',B'}(\vec{\alpha}, \vec{\beta}) \leq 0$. Notice that the
  coefficients of $q_{A,B,A',B'}$ only depend on $4\numMerges$ points in $S$,
  which implies that if we collect these quadratic functions over all quadruples
  of clusters $A, B, A', B' \subset S$, we will only obtain
  $O(|S|^{4\numMerges})$ different quadratic functions. These
  $O(|S|^{4\numMerges})$ functions induce a sign-pattern partition of
  $\Delta_\numMerges \times \region$ for which the desired conclusion holds.
  Next, observe that the coefficients in the quadratic functions defined above
  do not depend on the region $\region$ we started with. It follows that the
  same set of $O(|S|^{4 \numMerges})$ quadratic functions partition any other
  region $\region'$ in the sign-pattern partition induced by $\cH$ so that the
  claim holds on $\Delta_{\numMerges} \times \region'$.

  Now let $\cQ$ contain the linear functions in $\cH$ (viewed as quadratic
  functions over $\reals^{\numMerges + \numMetrics}$ by placing a zero
  coefficient on all quadratic terms and terms depending on $\vec{\alpha}$),
  together with the $O(|S|^{4\numMerges})$ quadratic functions defined above.
  Then we have that $|\cQ| = O(|S|^4 + |S|^{4\numMerges}) =
  O(|S|^{4\numMerges})$. Now suppose that $(\vec{\alpha},\vec{\beta})$ and
  $(\vec{\alpha}',\vec{\beta}')$ belong to the same region of the sign-pattern
  partition of $\Delta_\numMerges \times \Delta_\numMetrics \subset
  \reals^{\numMerges + \numMetrics}$ induced by the quadratic functions $\cQ$.
  Since $\cQ$ contains $\cH$, this implies that $\vec{\beta}$ and $\vec{\beta'}$
  belong to the same region $\region$ in the sign-pattern partition induced by
  $\cH$. Moreover, since $\cQ$ contains all the quadratic functions defined in
  \eqref{eq:clusterQuad}, it follows that $D_{\vec{\alpha}}(A, B;
  d_{\vec{\beta}}) \leq D_{\vec{\alpha}}(A', B'; d_{\vec{\beta}})$ if and only
  if $D_{\vec{\alpha}'}(A, B; d_{\vec{\beta}'}) \leq D_{\vec{\alpha}'}(A', B';
  d_{\vec{\beta}'})$, as required.
\end{proof}

Next, we prove \Cref{thm:sampleComplexity}.

\thmSampleComplexity*
\begin{proof}[Proof of \Cref{thm:sampleComplexity}]
  Define the class of loss functions $\cL = \{\ell_{\vec{\alpha},
  \vec{\beta}}(S, \target) = \ell(\cA_{\vec{\alpha}, \vec{\beta}}(S), \target)
  \mid (\vec{\alpha}, \vec{\beta}) \in \Delta_\numMerges \times
  \Delta_\numMetrics \}$. Let $p = \numMerges + \numMetrics$ be the dimension of
  the joint parameter space. If we can bound the pseudo-dimension of $\cL$ by
  $O(p^2(\log(p) + \numMerges \log(n))$, then the result follows immediately
  from standard pseudo-dimension based sample complexity guarantees
  \citep{Pollard84:Convergence}.

  \citet{Balcan19:General} show how to bound the pseudo-dimension of any
  function class $\cL$ when the class of \emph{dual functions} $\cL^*$ is
  piecewise structured. For each clustering instance $S$ with target clustering
  $\target$, there is one dual function $\ell_{S, \target} : \Delta_\numMerges
  \times \Delta_\numMetrics \to \reals$ defined by $\ell_{S,
  \target}(\vec{\alpha}, \vec{\beta}) = \ell_{\vec{\alpha}, \vec{\beta}}(S,
  \target)$. The key structural property we proved in
  \Cref{lem:quadraticPartition} guarantees that each dual function is piecewise
  constant, and the constant partition is the sign-pattern partition induced by
  $O(n^\numMerges)$ quadratic functions. Applying Theorem 3.1 of
  \citet{Balcan19:General}, we have that the $\pdim(\cL) = O(V \log(V) + V
  \numMerges \log n)$, where $V$ is the VC-dimension of the dual class to
  quadratic separators defined on $\reals^{p}$. The dual class consists of
  linear functions defined over $\reals^{p^2 + p + 1}$, and therefore its
  VC-dimension is bounded by $V = O(p^2)$. It follows that $\pdim(\cL) = O(p^2
  \log(p) + p^2 \numMerges \log(n))$, as required.
\end{proof}

\section{Appendix for Efficient Algorithm Selection} \label{app:ermAlgorithms}
\subsection{Learning the Merge Function}

In this section we provide details for learning the best combination of two
merge functions. We also give detailed pseudocode for our sweepline algorithm
for finding the children of a node in the execution tree (see
\Cref{alg:findMerges}) and for the complete algorithm (see
\Cref{alg:depthFirst}).

\begin{algorithm}
  \textbf{Input:} Set of clusters $C_1, \dots, C_m$, merge functions $\DZero, \DOne$, parameter interval $[\alo, \ahi)$.
  \begin{enumerate}[nosep, leftmargin=*]
    \item Let $\mathcal{M} = \emptyset$ be the initially empty set of possible merges.
    \item Let $\mathcal{I} = \emptyset$ be the initially empty set of parameter intervals.
    \item Let $\alpha = \alo$.
    \item While $\alpha < \ahi$:
    \begin{enumerate}[nosep, leftmargin=*]
      \item Let $C_i, C_j$ be the pair of clusters minimizing $(1-\alpha)\cdot \DZero(C_i, C_j) + \alpha \cdot \DOne(C_i, C_j)$.
      \item For each $k, l \in [m]$, let $c_{kl} = \Delta_0 / (\Delta_0 - \Delta_1)$, where $\Delta_p = \mergep{p}(C_i, C_j) - \mergep{p}(C_k, C_l)$ for $p \in \{0,1\}$.
      \item Let $c = \min \bigl( \{ c_{kl} \,|\, c_{kl} > \alpha\} \cup \{\ahi\} \bigr)$ .
      \item Add merge $(C_i, C_j)$ to $\mathcal{M}$ and $[\alpha, c)$ to $\mathcal{I}$.
      \item Set $\alpha = c$.
    \end{enumerate}
    \item Return $\mathcal{M}$ and $\mathcal{I}$.
  \end{enumerate}
  \caption{Find all merges for $\mergeFamily(\DZero,\DOne)$ }
  \label{alg:findMerges}
\end{algorithm}

\begin{algorithm}
\textbf{Input:} Point set $x_1, \dots, x_n$, cluster distance functions $d_1$ and $d_2$.
\begin{enumerate}[nosep, leftmargin=*]
\item Let $r$ be the root node of the execution tree with $r.\clusters = \{ (x_1), \dots, (x_n) \}$ and $r.I = [0,1]$.
\item Let $s$ be a stack of execution tree nodes, initially containing the root $r$.
\item Let $\mathcal{T} = \emptyset$ be the initially empty set of possible cluster trees.
\item Let $\mathcal{I} = \emptyset$ be the initially empty set of intervals.
\item While the stack $s$ is not empty:
  \begin{enumerate}[nosep, leftmargin=*]
  \item Pop execution tree node $e$ off stack $s$.
  \item If $e.\clusters$ has a single cluster, add $e.\clusters$ to $\mathcal{T}$ and $e.I$ to $\mathcal{I}$.
  \item Otherwise, for each merge $(C_i, C_j)$ and interval $I_c$ returned by \Cref{alg:findMerges} run on $e.\clusters$ and $e.I$:
    \begin{enumerate}
    \item Let $c$ be a new node with state given by $e.\clusters$ after merging $C_i$ and $C_j$ and $c.I = I_c$.
    \item Push $c$ onto the stack $s$.
    \end{enumerate}
  \end{enumerate}
  \item Return $\mathcal{T}$ and $\mathcal{I}$.
\end{enumerate}
\caption{Depth-first Enumeration of $\alpha$-linkage Execution Tree}
\label{alg:depthFirst}
\end{algorithm}

\lemMergeExecutionTree*
\begin{proof}
  The proof is by induction on the depth $t$. The base case is for depth $t =
  0$, in which case we can use a single node whose interval is $[0,1]$. Since
  all algorithms in the family start with an empty-sequence of merges, this
  satisfies the execution tree property.

  Now suppose that there is a tree of depth $t$ with the execution tree
  property. If $t = |S|-1$ then we are finished, since the algorithms in
  $\mergeFamily(\DZero, \DOne)$ make exactly $|S|-1$ merges. Otherwise, consider
  any leaf node $v$ of the depth $t$ tree with parameter interval $I_v$. It is
  sufficient to show that we can partition $I_v$ into subintervals such that for
  $\alpha$ in each subinterval the next merge performed is constant. By the
  inductive hypothesis, we know that the first $t$ merges made by
  $\mergeAlg{\alpha}$ are the same for all $\alpha \in I_v$. After performing
  these merges, the algorithm will have arrived at some set of clusters $C_1,
  \dots, C_m$ with $m = |S| - t$.  For each pair of clusters $C_i$ and $C_j$,
  the distance $\Dalpha(C_i, C_j) = (1-\alpha)\DZero(C_i, C_j) + \alpha
  \DOne(C_i, C_j)$ is a linear function of the parameter $\alpha$. Therefore,
  for any clusters $C_i, C_j, C_k$, and $C_l$, the algorithm will prefer to
  merge $C_i$ and $C_j$ over $C_j$ and $C_k$ for a (possibly empty) sub-interval
  of $I_v$, corresponding to the values of $\alpha \in I_v$ where $\Dalpha(C_i,
  C_j) < \Dalpha(C_k, C_l)$. For any fixed pair of clusters $C_i$ and $C_j$,
  taking the intersection of these intervals over all other pairs $C_j$ and
  $C_k$ guarantees that clusters $C_i$ and $C_j$ will be merged exactly for
  parameter values in some subinterval of $I_v$. For each merge with a non-empty
  parameter interval, we can introduce a child node of $v$ labeled by that
  parameter interval. These children partition $I_v$ into intervals where the
  next merge is constant, as required.
\end{proof}

\lemFindMergesRuntime*
\begin{proof}
  The loop in step 4 of \Cref{alg:findMerges} runs once for each possible merge,
  giving a total of $M$ iterations. Each iteration finds the closest pair of
  clusters according to $\Dalpha$ using $O(m^2)$ evaluations of the merge
  functions $\DZero$ and $\DOne$. Calculating the critical parameter value $c$
  involves solving $O(m^2)$ linear equations whose coefficients are determined
  by four evaluations of $\DZero$ and $\DOne$. It follows that the cost of each
  iteration is $O(m^2 K)$, where $K$ is the cost of evaluating $\DZero$ and
  $\DOne$, and the overall running time is $O(Mm^2 K)$.
\end{proof}

\thmDepthFirstRuntime*
\begin{proof}
  Fix any node $v$ in the execution tree with $m$ clusters $C_1, \dots, C_m$ and
  $M$ outgoing edges (i.e., $M$ possible merges from the state represented by
  $v$). We run \Cref{alg:findMerges} to determine the children of $v$, which by
  \Cref{lem:findMergesRuntime} costs $O(M n^2 K)$, since $m \leq n$. Summing
  over all non-leaves of the execution tree, the total cost is $O(E n^2 K)$. In
  addition to computing the children of a given node, we need to construct the
  children nodes, but this takes constant time per child.
\end{proof}

\subsection{Learning the Metric}

In this section we provide details for learning the best combination of two
metrics. We also give detailed pseudocode for our sweepline algorithm for
finding the children of a node in the execution tree (see
\Cref{alg:findMergesMetric}) and for the complete algorithm (see
\Cref{alg:depthFirstMetric}).

\lemMetricExecutionTree*
\begin{proof}
  The proof is by induction on the depth $t$. The base case is for depth $t =
  0$, in which case we can use a single node whose interval is $[0,1]$. Since
  all algorithms in the family start with an empty-sequence of merges, this
  satisfies the execution tree property.

  Now suppose that there is a tree of depth $t$ with the execution tree
  property. If $t = |S|-1$ then we are finished, since the algorithms in
  $\metricFamily(\dZero, \dOne)$ make exactly $|S|-1$ merges. Otherwise,
  consider any leaf node $v$ of the depth $t$ tree with parameter interval
  $I_v$. It is sufficient to show that we can partition $I_v$ into subintervals
  such that for $\beta$ in each subinterval the next merge performed is
  constant. By the inductive hypothesis, we know that the first $t$ merges made
  by $\metricAlg{\beta}$ are the same for all $\beta \in I_v$. After performing
  these merges, the algorithm will have arrived at some set of clusters $C_1,
  \dots, C_m$ with $m = |S| - t$. Recall that algorithms in the family
  $\metricFamily(\dZero, \dOne)$ run complete linkage using the metric $\dbeta$.
  Complete linkage can be implemented in such a way that it only makes
  comparisons between pairwise point distances (i.e., is $\dbeta(x,x')$ larger
  or smaller than $\dbeta(y,y')$?). To see this, for any pair of clusters, we
  can find the farthest pair of points between them using only distance
  comparisons. And, once we have the farthest pair of points between all pairs
  of clusters, we can find the pair of clusters to merge by again making only
  pairwise comparisons. It follows that if two parameters $\beta$ and $\beta'$
  have the same outcome for all pairwise distance comparisons, then the next
  merge to be performed must be the same. We use this observation to partition
  the interval $I_v$ into subintervals where the next merge is constant. For any
  pair of points $x, x' \in S$, the distance $\dbeta(x,x') = (1-\beta)
  \dZero(x,x') + \beta \dOne(x,x')$ is a linear function of the parameter
  $\beta$. Therefore, for any points $x,x', y, y' \in S$, there is at most one
  critical parameter value where the relative order of $\dbeta(x,x')$ and
  $\dbeta(y,y')$ changes. Between these $O(|S|^4)$ critical parameter values,
  the ordering on all pairwise merges is constant, and the next merge performed
  by the algorithm will also be constant. Therefore, there must exist a
  partitioning of $I_v$ into at most $O(|S|^4)$ sub-intervals such that the next
  merge is constant on each interval. We let the children of $v$ correspond to
  the coarsest such partition.
\end{proof}

\lemFindMergesRuntimeMetric*
\begin{proof}
  The loop in step 4 of \Cref{alg:findMergesMetric} runs once for each possible
  merge, giving a total of $M$ iterations. Each iteration finds the merge
  performed by complete linkage using the $\dbeta$ metric, which takes $O(n^2)$
  time, and then solves $O(n^2)$ linear equations to determine the largest value
  of $\beta'$ such that the same merge is performed. It follows that the cost of
  each iteration is $O(n^2)$, leading to an overall running time of $O(Mn^2)$.
  Note, we assume that the pairwise distances $\dbeta(x,x')$ can be evaluated in
  constant time. This can always be achieved by precomputing two $n \times n$
  distance matrices for the base metrics $\dZero$ and $\dOne$, respectively.
\end{proof}

\begin{algorithm}
  \textbf{Input:} Set of clusters $C_1, \dots, C_m$, metrics $\dZero, \dOne$, parameter interval $[\blo, \bhi)$.
  \begin{enumerate}[nosep, leftmargin=*]
    \item Let $\mathcal{M} = \emptyset$ be the initially empty set of possible merges.
    \item Let $\mathcal{I} = \emptyset$ be the initially empty set of parameter intervals.
    \item Let $\beta = \blo$.
    \item While $\beta < \bhi$:
    \begin{enumerate}[nosep, leftmargin=*]
      \item Let $C_i, C_j$ be the pair of clusters minimizing $\max_{a \in A, b \in B} \dbeta(a, b)$.
      \item Let $x \in C_i$ and $x' \in C_j$ be the farthest points between $C_i$ and $C_j$.
      \item For all pairs of points $y$ and $y'$ belonging to different clusters, let $c_{yy'} = \Delta_0 / (\Delta_0 - \Delta_1)$ where $\Delta_p = \metricp{p}(y,y') - \metricp{p}(x,x')$ for $p \in \{0,1\}$.
      \item Let $c = \min \bigl( \{ c_{yy'} \,|\, c_{yy'} > \beta \} \cup \{\bhi\} \bigr)$ .
      \item Add merge $(C_i, C_j)$ to $\mathcal{M}$ and $[\beta, c)$ to $\mathcal{I}$.
      \item Set $\beta = c$.
    \end{enumerate}
    \item Return $\mathcal{M}$ and $\mathcal{I}$.
  \end{enumerate}
  \caption{Find all merges for $\metricFamily(\dZero,\dOne)$ }
  \label{alg:findMergesMetric}
\end{algorithm}

\thmDepthFirstRuntimeMetric*
\begin{proof}
  Fix any node $v$ in the execution tree with $m$ clusters $C_1, \dots, C_m$ and
  $M$ outgoing edges (i.e., $M$ possible merges from the state represented by
  $v$). We run \Cref{alg:findMergesMetric} to determine the children of $v$,
  which by \Cref{lem:findMergesRuntimeMetric} costs $O(M n^2)$. Summing over all
  non-leaves of the execution tree, the total cost is $O(E n^2)$.
\end{proof}

\begin{algorithm}
\textbf{Input:} Point set $x_1, \dots, x_n$, cluster distance functions $d_1$ and $d_2$.
\begin{enumerate}[nosep, leftmargin=*]
\item Let $r$ be the root node of the execution tree with $r.\clusters = \{ (x_1), \dots, (x_n) \}$ and $r.I = [0,1]$.
\item Let $s$ be a stack of execution tree nodes, initially containing the root $r$.
\item Let $\mathcal{T} = \emptyset$ be the initially empty set of possible cluster trees.
\item Let $\mathcal{I} = \emptyset$ be the initially empty set of intervals.
\item While the stack $s$ is not empty:
  \begin{enumerate}[nosep, leftmargin=*]
  \item Pop execution tree node $e$ off stack $s$.
  \item If $e.\clusters$ has a single cluster, add $e.\clusters$ to $\mathcal{T}$ and $e.I$ to $\mathcal{I}$.
  \item Otherwise, for each merge $(C_i, C_j)$ and interval $I_c$ returned by \Cref{alg:findMergesMetric} run on $e.\clusters$ and $e.I$:
    \begin{enumerate}
    \item Let $c$ be a new node with state given by $e.\clusters$ after merging $C_i$ and $C_j$ and $c.I = I_c$.
    \item Push $c$ onto the stack $s$.
    \end{enumerate}
  \end{enumerate}
  \item Return $\mathcal{T}$ and $\mathcal{I}$.
\end{enumerate}
\caption{Depth-first Enumeration of $\beta$-linkage Execution Tree}
\label{alg:depthFirstMetric}
\end{algorithm}

\section{Appendix for Experiments} \label{app:experiments}
\paragraph{Clustering distributions.}

\noindent\textit{MNIST Subsets.} Our first distribution over clustering tasks
corresponds to clustering subsets of the MNIST dataset \citep{LeCun98:MNIST},
which contains 80,000 hand-written examples of the digits $0$ through $9$. We
generate a random clustering instance from the MNIST data as follows: first, we
select $k = 5$ digits from $\{0, \dots, 9\}$ at random, then we randomly select
$200$ examples belonging to each of the selected digits, giving a total of $n =
1000$ images. The target clustering for this instance is given by the
ground-truth digit labels. We measure distances between any pair of digits in
terms of the the Euclidean distance between their images represented as vectors
of pixel intensities.

\itparagraph{CIFAR-10 Subsets.} We also consider a distribution
over clustering tasks that corresponds to clustering subsets of the CIFAR-10
dataset \citep{Krizhevsky09:CIFAR}. This dataset contains $6000$ images of each
of the following classes: airplane, automobile, bird, cat, deer, dog, frog,
horse, ship, and truck. Each example is a $32 \times 32$ color image with 3
color channels. We pre-process the data to obtain neural-network feature
representations for each example. We include 50 randomly rotated and cropped
versions of each example and obtain feature representations from layer `in4d' of
a pre-trained Google inception network.
This gives a $144$-dimensional feature representation for each of the $3000000$
examples (50 randomly rotated copies of the 6000 examples for each of the 10
classes). We generate clustering tasks from CIFAR-10 as follows: first, select $k
= 5$ classes at random, then choose $50$ examples belonging to each of the
selected classes, giving a total of $n = 250$ images. The target clustering for
this instance is given by the ground-truth class labels. We measure distance
between any pair of images as the distance between their feature embeddings.

\itparagraph{Omniglot Subsets.} Next, we consider a distribution over clustering
tasks corresponding to clustering subsets of the Omniglot dataset
\citep{Lake15:Omniglot}. The Omniglot dataset consists of written characters
from 50 different alphabets with a total of 1623 different characters. The
dataset includes 20 examples of each character, leading to a total of 32,460
examples. We generate a random clustering instance from the Omniglot data as
follows: first, we choose one of the alphabets at random. Next, we choose $k$
uniformly in $\{5, \dots, 10\}$ and choose $k$ random characters from that
alphabet. The clustering instance includes $20k$ examples and the target
clustering is given by the ground-truth character labels.

We use two different distance metrics on the Omniglot dataset. First, we use the
cosine distance between neural network feature embeddings. The neural network
was trained to perform digit classification on MNIST. Second, each example has
both an image of the written character, as well as the stroke trajectory (i.e.,
a time series of $(x,y)$ coordinates of the tip of the pen when the character
was written). We also use the following distance defined in terms of the
strokes: Given two trajectories $s = (x_t, y_t)_{t=1}^T$ and $s' = (x'_t,
y'_t)_{t=1}^T$, we define the distance between them by $ d(s,s') = \frac{1}{T +
T'} \left( \sum_{t=1}^T d\bigl((x_t, y_t), s'\bigr) + \sum_{t=1}^{T'}
d\bigl((x'_t, y'_t), s\bigr) \right), $ where $d\bigl((x_t, y_t), s'\bigr)$
denotes the Euclidean distance from the point $(x_t, y_t)$ to the closest point
in $s'$. This is the average distance from any point from either trajectory to
the nearest point on the other trajectory. This hand-designed metric provides a
complementary notion of distance to the neural network feature embeddings.

\itparagraph{Places2 Subsets.} The Places2 dataset consists of images of 365
different place categories, including ``volcano'', ``gift shop'', and
``farm''~\citep{Zhou17:Places}. To generate a clustering instance from the
places data, we choose $k$ randomly from $\{5, \dots, 10\}$, choose $k$ random
place categories, and then select $20$ random examples from each chosen
category.  We restrict ourselves to the first 1000 images from each class.

We use two metrics for this data distribution. First, we use cosine distances
between feature embeddings generated by a VGG16 network pre-trained on imagenet.
In particular, we use the activations just before the fully connected layers,
but after the max-pooling is performed, so that we have $512$-dimensional
feature vectors. Second, we compute color histograms in HSV space for each image
and use the cosine distance between the histograms. In more detail, we partition
the hue space into $8$ bins, the saturation space into $2$ bins, and the value
space into $4$ bins, resulting in a 64-dimensional histogram counting how
frequently each quantized color appears in the image. Two images are close under
this metric if they contain similar colors.

\itparagraph{Places2 Diverse Subsets.} We also construct an instance
distribution from a subset of the Places2 classes which have diverse color
histograms. We expect the color histogram metric to perform better on this
distribution. To generate a clustering instance, we pick $k = 4$ classes from
aquarium, discotheque, highway, iceberg, kitchen, lawn, stage-indoor, underwater
ocean deep, volcano, and water tower. We include $50$ randomly sampled images
from each chosen class, leading to a total of $n = 200$ points per instance.

\itparagraph{Synthetic Rings and Disks.} We consider a two
dimensional synthetic distribution where each clustering instance has 4
clusters, where two are ring-shaped and two are disk-shaped. To generate each
instance we sample 100 points uniformly at random from each ring or disk. The
two rings have radiuses $0.4$ and $0.8$, respectively, and are both centered at
the origin. The two disks have radius $0.4$ and are centered at $(1.5, 0.4)$ and
$(1.5, -0.4)$, respectively. For this data, we measure distances between points
in terms of the Euclidean distance between them.

\paragraph{Average Number of Discontinuities}. Next we report the average number
of discontinuities in the loss function for a clustering instance sampled from
each of the distributions described above for each of the learning tasks we
consider. In all cases, the average number of discontinuities is many orders of
magnitude smaller than the upper bounds. The metric learning problems tend to
have more discontinuities than learning the best merge function. Surprisingly,
even though our only worst-case bound on the number of discontinuities when
interpolating between average and complete linkage is exponential in $n$, the
empirical number of discontinuities is always smaller than for interpolating
between single and complete linkage. The results are shown in
\Cref{table:discontinuities}.

\begin{table}[h!]
\centering
\begin{tabular}{c  c  c  c}
 \hline
 Distribution & Task & max $n$ & Average \# Discontinuities \\
 \hline\hline
 Omniglot & SC & 200 & 59.4 \\
 Omniglot & AC & 200 & 33.9 \\
 Omniglot & metric & 200 & 201.1 \\
 \hline
 MNIST & SC & 1000 & 362.6 \\
 MNIST & AC & 1000 & 282.0 \\
 \hline
 Rings and Disks & SC & 400 & 29.0 \\
 Rings and Disks & AC & 400 & 18.3 \\
 \hline
 CIFAR-10 & SC & 250 & 103.2 \\
 CIFAR-10 & AC & 250 & 66.2 \\
 \hline
 Places2 & metric & 200 & 241.0 \\
 Places2 Diverse & metric & 200 & 269.6
\end{tabular}
\caption{Table of average number of discontinuities for a piecewise constant loss function sampled from each distribution and learning task. Task `SC' corresponds to interpolating between single and complete linkage, `AC` is interpolating between average and complete linkage, and `metric` is interpolating between two base metrics. The column labeled ``max $n$'' is an upper bound on the size of each clustering instance.}
\label{table:discontinuities}
\end{table}

\end{document}